\newtheorem{thm}{Theorem}
\newtheorem{assum}{Assumption}
\newcommand{\e}[1]{E\left[ #1 \right]}
\title{Stochastic Discriminative EM}
\author{
\begin{tabular}{c@{\hspace{10mm}}c}
\multicolumn{2}{c}{Andr\'es R. Masegosa$^{\dagger\ddagger}$} \\\\
\textnormal{$^\dagger$ Dept. of Computer and Information Science} &  \textnormal{$^\ddagger$ Dept. of Computer Science and A. I.} \\
\textnormal{Norwegian University of Science and Technology} & \textnormal{University of Granada} \\
\textnormal{Trondheim, Norway }& \textnormal{Granada, Spain}\\
\end{tabular}
}
\begin{document}
\maketitle

\begin{abstract}
Stochastic discriminative EM (sdEM) is an online-EM-type algorithm for discriminative training of probabilistic generative models belonging to the exponential family. In this work, we introduce and justify this algorithm as a stochastic \emph{natural gradient} descent method, i.e. a method which accounts for the information geometry in the parameter space of the statistical model. We show how this learning algorithm can be used to train probabilistic generative models by minimizing different discriminative loss functions, such as the negative conditional log-likelihood and the Hinge loss. The resulting models trained by sdEM are always generative (i.e. they define a joint probability distribution) and, in consequence, allows to deal with missing data and latent variables in a principled way either when being learned or when making predictions. The performance of this method is illustrated by several text classification problems for which a multinomial naive Bayes and a latent Dirichlet allocation based classifier are learned using different discriminative loss functions.
\end{abstract}

\section{INTRODUCTION}

Online learning methods based on stochastic approximation theory \cite{kushner1997stochastic}
have been a promising research direction to tackle the
learning problems of the so-called Big Data era \cite{ahn2012bayesian,bottou2010large,bottou2007tradeoffs}. Stochastic
gradient descent (SGD) is probably the best known example of this kind of
techniques, used to solve a wide range of learning
problems \cite{bottou1998online}. This algorithm and
other versions \cite{ruppert1988efficient} are usually employed to train discriminative models such as
logistic regression or SVM \cite{bottou2010large}.

There also are some successful examples of the use of SGD for discriminative training of probabilistic generative models, as is the case of deep belief networks \cite{hinton2006fast}. However, this learning algorithm cannot be used directly for the discriminative training of general generative
models. One of the main reasons is that statistical estimation or risk minimization problems of generative models involve the solution of an optimization problem with a large number of \emph{normalization constraints} \cite{PernkopfWT12}, i.e. those which guarantee that the optimized parameter set defines a valid probabilistic model. Although successful solutions to this problem have been proposed  \cite{greiner2005structural,lacoste2008disclda,PernkopfWT12,zhu2009medlda}, they are based on ad-hoc methods which cannot be easily extended to other statistical models, and hardly scale to large data sets. 


Stochastic approximation theory \cite{kushner1997stochastic} has also	
been used for 	maximum likelihood estimation (MLE) of probabilistic generative models
with latent variables, as is the case of the online
EM algorithm \cite{cappe2009line,sato2000convergence}. This method provides
efficient MLE estimation for a broad class of statistical models (i.e. exponential family models)
by sequentially updating the so-called \emph{expectation parameters}.  The advantage of this approach is
that the resulting iterative optimization algorithm is fairly simple and amenable, as it does not
involve any normalization constraints.  

In this paper we show that the derivation of Sato's online EM \cite{sato2000convergence}
can be extended for the discriminative learning of generative models
by introducing a novel interpretation of this algorithm as a natural
gradient algorithm \cite{amari1998natural}.
The resulting algorithm, called stochastic discriminative EM
(sdEM), is an online-EM-type algorithm that can train generative
probabilistic models belonging to the exponential family
using a wide range of discriminative loss functions, such as the
negative conditional log-likelihood or the Hinge loss. In opposite to other discriminative learning approaches \cite{PernkopfWT12}, models trained by sdEM can deal with missing data and latent variables in a principled way either when being learned or when making predictions, because at any moment they always define a joint probability distribution. sdEM could be used for learning using large scale data sets due to its stochastic approximation nature and, as we will show, because it allows to compute the \emph{natural gradient} of the loss function with no extra cost  \cite{amari1998natural}. Moreover, if allowed by the generative model and the discriminative loss function, the presented algorithm could potentially be used interchangeably for classification or regression or any other prediction task. But in this initial work, sdEM is only experimentally evaluated in classification problems.

The rest of this paper is organized as follows. Section 2 provides the preliminaries for the description 
of the sdEM algorithm, which is detailed in Section 3. A brief experimental evaluation is given in Section 4, while
Section 5 contains the main conclusions of this work.

\section{PRELIMINARIES}
\label{Section:Preliminaries}

\subsection{MODEL AND ASSUMPTIONS}
\label{Section:Models}


We consider generative statistical models for prediction tasks,
where $Y$ denotes the random variable (or the vector-value random
variable) to be predicted, $X$ denotes the predictive variables,
and $y^\star$ denotes a prediction, which is made according to $y^\star = arg\max_y p(y,x|\theta)$.
\begin{assum}\label{Assumption:NEF}
The generative data model belongs to the exponential family with a natural (or canonical) parametrization
\begin{equation*}
p(y,x|\theta) \propto exp(\langle s(y,x), \theta\rangle -
A_l(\theta) )
\end{equation*}
\noindent where $\theta$ is the so-called natural parameter which
belongs to the so-called \emph{natural parameter space}
$\Theta\in\Re^K$, $s(y,x)$ is the
vector of sufficient statistics belonging to a convex set
$\mathcal{S} \subseteq\Re^K$, $\langle\cdot,\cdot\rangle$
denotes the dot product and $A_l$ is the log partition function.
\end{assum}
\begin{assum}\label{Assumption:ConjugatePrior}
We are given a conjugate prior distribution $p(\theta|\alpha)$ of
the generative data model
\begin{equation*}
p(\theta|\alpha) \propto exp(\langle s(\theta), \alpha \rangle -
A_g(\alpha))
\end{equation*}
\noindent where the sufficient statistics are
$s(\theta)=(\theta,-A_l(\theta))$ and the hyperparameter $\alpha$
has two components $(\bar{\alpha}, \nu)$. $\nu$ is a positive
scalar and $\bar{\alpha}$ is a vector also belonging to
$\mathcal{S}$ \cite{bernardo2009bayesian}.

\end{assum}

\subsection{DUAL PARAMETERIZATION AND ASSUMPTIONS}
\label{Section:DualParameters} 
The so-called \emph{expectation
parameter}
$\mu\in\mathcal{S}$ can also be used to parameterize probability
distributions of the exponential family. It is a dual set
of the model parameter $\theta$ \cite{amari1985differential}. This
\emph{expectation parameter} $\mu$ is defined as the expected
vector of sufficient statistics with respect to $\theta$:
\begin{equation}\label{Equation:1}
\begin{array}{lll}
\mu & \triangleq & \e{s(y,x)|\theta}  = \int s(y,x)p(y,x|\theta)
dydx\\
& = &\partial A_l(\theta)/\partial \theta
\end{array}
\end{equation}
The transformation between $\theta$ and $\mu$ is one-to-one: $\mu$
is a dual set of the model parameter $\theta$ \cite{amari1985differential}.
Therefore, Equation (\ref{Equation:1}) can be inverted as: $\theta
= \theta(\mu)$.
That is to say, for each $\theta\in \Theta$ we always have an
associated $\mu\in\mathcal{S}$ and both parameterize the same
probability distribution.

For obtaining the \emph{natural parameter} $\theta$ associated to
an \emph{expectation parameter} $\mu$, we need to make use of the
negative of the entropy,
\begin{equation}\label{Equation:2}
\begin{array}{lll}
H(\mu) & \triangleq & \int p(y,x|\theta(\mu)) \ln p(y,x|\theta(\mu)) dydx\\
&=& \sup_{\theta\in\Theta} \langle \mu, \theta\rangle -
A_l(\theta)
\end{array}
\end{equation}
Using the above function, the natural parameter $\theta$ can be
explicitly expressed as
\begin{equation}\label{Equation:3} \theta =
\theta(\mu) = \partial H(\mu)/\partial \mu
\end{equation}
Equations (\ref{Equation:1}), (\ref{Equation:2}),
(\ref{Equation:3}) define the Legendre-Fenchel transform.

Another key requirement of our approach is that it should be possible to 
compute the transformation from $\mu$ to $\theta$ in closed form:
\begin{assum}\label{Assumption:ML}
The transformation from the expectation parameter $\mu$ to the
natural parameter $\theta$, which can be expressed as
\begin{eqnarray}
\theta(\mu) = arg\max_{\theta\in\Theta} \langle \mu, \theta\rangle
-A_l(\theta)
\end{eqnarray}
is available in closed form.
\end{assum}
The above equation is also known as the \emph{maximum likelihood
function}, because $\theta(\frac{1}{n}\sum_{i=1}^n s(y_i,x_i))$
gives the maximum likelihood estimation $\theta^\star$ for a data
set with $n$ observations $\{(y_1,x_1),\ldots,(y_n,x_n)\}$.

For later convenience, we show the following relations between the
Fisher Information matrices $I(\theta)$ and $I(\mu)$ for the
probability distributions $p(y,x|\theta)$ and
$p(y,x|\theta(\mu))$, respectively \cite{nielsen2009statistical}:
\begin{eqnarray}\label{Equation:ExponentialIdentities}
I(\theta) & = &\frac{\partial^2 A_l(\theta)}{\partial
\theta\partial\theta} =
\frac{\partial \mu}{\partial \theta}  = I(\mu)^{-1}\\
I(\mu) &= &\frac{\partial^2 H(\mu)}{\partial\mu\partial\mu}  =  \frac{\partial \theta}{\partial \mu} = I(\theta)^{-1}
\end{eqnarray}

\subsection{THE NATURAL GRADIENT}
\label{Section:NaturalGradient}
Let $\mathcal{W}=\{w\in \Re^K\}$
be a parameter space on which the function $L(w)$ is defined. When
$\mathcal{W}$ is a Euclidean space with an orthonormal coordinate
system, the negative gradient points in the direction of steepest
descent. That is, the negative gradient $-\partial L(w)/\partial
w$ points in the same direction as the solution to:
\begin{equation}\label{Equation:EuclideanGradient}
arg\min_{dw} L(w +
dw)\hspace{2mm}subject\hspace{1mm}to\hspace{2mm}||dw||^2 =
\epsilon^2
\end{equation}
\noindent for sufficiently small $\epsilon$, where $||dw||^2$
is the squared length of a small increment vector $dw$ connecting
$w$ and $w+dw$.
This justifies the use of the classical gradient descent method
for finding the minimum of $L(w)$ by taking steps (of size $\rho$) in
the direction of the negative gradient:
\begin{equation}\label{Equation:GradientDescent}
w_{t+1} = w_{t} - \rho \frac{\partial L(w_{t})}{\partial w}
\end{equation}
However, when $\mathcal{W}$ is a Riemannian space \cite{amari2007methods}, there are no
orthonormal linear coordinates, and the squared length of vector
$dw$ is defined by the following equation,
\begin{equation}\label{Equation:RiemannianDistance}
||dw||^2 =  \sum_{ij} g_{ij}(w)dw_idw_j
\end{equation}
\noindent where the $K\times K$ matrix $G=(g_{ij})$ is called the
Riemannian metric tensor, and it generally depends on $w$. $G$
reduces to the identity matrix in the case of the Euclidean space \cite{amari2007methods}.

In a Riemannian space, the steepest descent direction is not
anymore the traditional gradient. That is, $-\partial
L(w)/\partial w$ is not the solution of Equation
(\ref{Equation:EuclideanGradient}) when the squared length of the
distance of $dw$ is defined by Equation
(\ref{Equation:RiemannianDistance}). Amari \cite{amari1998natural} shows that this
solution can be computed by pre-multiplying the traditional
gradient by the inverse of the Riemannian metric $G^{-1}$,
\begin{thm}\label{Theorem:Amari} The steepest descent direction
or the natural gradient of $L(w)$ in a Riemannian space is given
by
\begin{equation}
-\frac{\tilde{\partial} L(w)}{\tilde{\partial}w} =
-G^{-1}(w)\frac{\partial L(w)}{\partial w}
\end{equation}
\end{thm}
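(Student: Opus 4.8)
The plan is to solve directly the constrained optimization problem that defines the steepest descent direction, namely the problem in Equation (\ref{Equation:EuclideanGradient}) but with the squared length given by the Riemannian form of Equation (\ref{Equation:RiemannianDistance}) rather than the Euclidean one. First I would write the candidate increment as $dw = \epsilon a$ subject to the normalization $\sum_{ij} g_{ij}(w) a_i a_j = 1$, so that $a$ is a unit vector in the Riemannian metric and $\epsilon$ fixes its length. A first-order Taylor expansion then gives $L(w+dw) \approx L(w) + \epsilon \langle \partial L(w)/\partial w, a\rangle$, so that minimizing $L(w+dw)$ for fixed small $\epsilon$ reduces to minimizing the linear functional $\langle \partial L(w)/\partial w, a\rangle$ over all $a$ satisfying the quadratic constraint $a^T G(w) a = 1$.

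Next I would attack this problem (a linear objective under a single quadratic constraint) with one Lagrange multiplier $\lambda$, forming $\phi(a,\lambda) = \langle \partial L/\partial w, a\rangle - \lambda\,(a^T G a - 1)$ and setting $\partial \phi/\partial a = 0$. Because $G$ is symmetric, stationarity yields $\partial L/\partial w = 2\lambda G a$; and since $G$ is positive definite (it is a metric tensor) and hence invertible, this solves to $a = (2\lambda)^{-1} G^{-1}\,\partial L/\partial w$. Thus the optimal increment must be proportional to $G^{-1}\,\partial L/\partial w$, and reinstating $dw = \epsilon a$ shows $dw^\star \propto G^{-1}\,\partial L/\partial w$, which is exactly the claimed form up to a scalar factor.

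The remaining step, which is where the real care is needed, is fixing the sign and confirming that the stationary point is the minimizer rather than the maximizer. I would substitute $a = (2\lambda)^{-1} G^{-1}\,\partial L/\partial w$ back into the constraint $a^T G a = 1$ to solve for $\lambda$, obtaining the two roots $\lambda = \pm\tfrac{1}{2}\sqrt{(\partial L/\partial w)^T G^{-1}(\partial L/\partial w)}$. Evaluating the objective $\langle \partial L/\partial w, a\rangle = (2\lambda)^{-1}(\partial L/\partial w)^T G^{-1}(\partial L/\partial w)$ on each root shows that the negative root gives the smaller value, i.e. the descent direction; positive definiteness of $G^{-1}$ guarantees the quadratic form under the square root is nonnegative, so the roots are real and the critical point is a genuine minimum. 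Selecting the descent root gives $a \propto -G^{-1}\,\partial L/\partial w$, and hence the steepest-descent (natural gradient) direction is $-G^{-1}(w)\,\partial L(w)/\partial w$, as asserted. The main obstacle is therefore not the calculus itself but the bookkeeping of the sign/root choice, together with the observation that positive definiteness of $G$ is what makes the problem well posed and the stationary point an actual minimizer.
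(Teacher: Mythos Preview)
Your argument is correct and is essentially the classical Lagrangian derivation due to Amari: linearize $L(w+dw)$ to first order, impose the quadratic constraint $a^T G(w) a = 1$, solve the resulting stationarity condition to get $a \propto G^{-1}\partial L/\partial w$, and pick the sign by evaluating the objective on the two roots. The bookkeeping you flag (sign selection and positive definiteness of $G$) is exactly the right place to be careful, and you handle it cleanly.

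Note, however, that the paper does not supply its own proof of this theorem at all: it simply states the result and attributes it to Amari~\cite{amari1998natural}. So there is no in-paper argument to compare against; what you have written is precisely the standard proof one finds in that reference. In that sense your proposal is not an alternative route but a faithful reconstruction of the cited source.
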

\noindent where $\tilde{\partial} L(w)/\tilde{\partial}w$ denotes
the \emph{natural gradient}.

As argued in \cite{amari1998natural}, in statistical estimation problems we
should used gradient descent methods which account for the natural
gradient of the parameter space, as the parameter space of a statistical model (belonging to the
exponential family or not) is a Riemannian space with the Fisher
information matrix of the statistical model $I(w)$ as the tensor
metric \cite{amari1985differential}, and this is the only invariant metric that must be given to the
statistical model \cite{amari1985differential}.

\subsection{SATO'S ONLINE EM ALGORITHM}\label{Section:OnlineEM}

Sato's online EM algorithm \cite{sato2000convergence}
is used for maximum likelihood estimation of
missing data-type statistical models. The model defines a
probability distribution over two random or vector-valued
variables $X$ and $Z$, and is assumed to belong to the exponential family:
\begin{equation*}
p(z,x|\theta) \propto exp(\langle s(z,x), \theta\rangle -
A_l(\theta) )
\end{equation*}
\noindent where $(z,x)$ denotes a so-called \emph{complete data}
event. The key aspect is that we can only observe $x$, since
$z$ is an unobservable event. In consequence, the loss function
$\ell(x,\theta)$\footnote{We derive this algorithm in terms of
minimization of a loss function to highlight its connection with
sdEM.} is defined by marginalizing $z$: $\ell(x,\theta) =
-\ln \int p(z,x) dz$.

The online setting assumes the observation of a non-finite data sequence
 $\{(x_t)\}_{t\geq 0}$ independently drawn
according to the unknown data distribution $\pi$. The objective
function that EM seeks to minimize is given by the following
expectation: $L(\theta) = \e{\ell(x,\theta)|\pi}$.

Sato \cite{sato2000convergence} derived the stochastic updating equation of online EM by relying on the
free energy formulation, or lower bound maximization, of the EM algorithm \cite{neal1998view} and on a discounting
averaging method. Using our own notation, this updating equation is
expressed as follows,
\begin{align}\label{Equation:SatoIteration}
\mu_{t+1} &=& (1-\rho_t)\mu_t + \rho_t
E_z[s(z,x_t|\theta(\mu_t)]\nonumber\\ &=& \mu_t + \rho_t
\left(E_z[s(z,x_t|\theta(\mu_t)] - \mu_t\right)\nonumber\\ &=&
\mu_t + \rho_t \frac{\partial
\ell(x_t,\theta(\mu_t))}{\partial\theta}
\end{align}
\noindent where $E_z[s(z,x_t|\theta(\mu_t)]$ denotes the expected
sufficient statistics, $E_z[s(z,x_t|\theta(\mu_t)]=\int
s(z,x_t)p(z|x_t,\theta(\mu_t))dz$.

He proved the convergence of the above iteration method by
casting it as a second order stochastic gradient descent using
the following equality,
\begin{equation}\label{Equation:Sato}
\frac{\partial \ell(x,\theta)}{\partial \theta} = \frac{\partial
\mu}{\partial \theta} \frac{\partial
\ell(x,\theta(\mu))}{\partial \mu}\\ = I(\mu)^{-1}\frac{\partial
\ell(x,\theta(\mu))}{\partial \mu}
\end{equation}
This equality is obtained by firstly applying the chain rule, followed 
by the equality shown in Equation
(\ref{Equation:ExponentialIdentities}). It shows that online EM is
equivalent to a stochastic gradient descent with $I(\mu_t)^{-1}$ as
coefficient matrices \cite{bottou1998online}.

Sato noted that that the third term of the equality in Equation
(\ref{Equation:Sato}) resembles a natural gradient (see Theorem \ref{Theorem:Amari}), but
he did not explore the connection. But the key insights of the
above derivation, which were not noted by Sato, is
that Equation (\ref{Equation:Sato}) is also valid for other loss
functions different from the marginal log-likelihood; and
that the convergence of Equation (\ref{Equation:SatoIteration})
does not depend on the formulation of the EM as a ``lower bound
maximization'' method \cite{neal1998view}.

\section{STOCHASTIC DISCRIMINATIVE EM}\label{Section:sdEM}

\subsection{THE sdEM ALGORITHM}\label{Section:sdEMAlgorithm}

We consider the following supervised learning setup. Let us assume that we
are given a data set $D$ with $n$ observations
$\{(y_1,x_1),\ldots,(y_n,x_n)\}$. We are also given a
\emph{discriminative loss function}\footnote{The loss function is
assumed to satisfy the mild conditions given in \cite{bottou1998online}. E.g., it can
be a non-smooth function, such as the Hinge Loss.}
$\ell(y_i,x_i,\theta)$. For example, it could be the negative
conditional log-likelihood (NCLL) $\ell(y_i,x_i,\theta)=-\ln
p(y_i,x_i|\theta) + \ln \int p(y,x_i|\theta)dy = -\ln
p(y_i|x_i,\theta)$. Our learning problem consists in minimizing 
the following objective function:
\begin{eqnarray}
L(\theta) & = & \sum_{i=1}^n \ell(y_i,x_i,\theta) - \ln
p(\theta|\alpha)\nonumber
\\
& = & \e{\ell(y,x,\theta)|\pi} - \frac{1}{n}\ln
p(\theta|\alpha)\label{Equation:sdEMPiorLoss}
\end{eqnarray}
\noindent where $\pi$ is now the empirical distribution of $D$
and $\e{\ell(y,x,\theta)|\pi}$ the empirical risk. Although the
above loss function is not standard in the machine learning
literature, we note that when $\ell$ is the negative
log-likelihood (NLL), we get the classic \emph{maximum a posterior
estimation}. This objective function can be seen as an extension
of this framework.

sdEM is presented as a generalization of Sato's online EM
algorithm for finding the minimum of an objective function in the
form of Equation (\ref{Equation:sdEMPiorLoss}) (i.e. the solution
to our learning problem). The stochastic updating equation of sdEM can be expressed as follows,
\begin{equation}\label{Equation:sdEMSecondOrder}
\mu_{t+1} = \mu_{t} - \rho_t I(\mu_{t})^{-1}\frac{\partial
\bar{\ell}(y_t,x_t,\theta(\mu_{t}))}{\partial \mu}
\end{equation}
\noindent where  $(y_t,x_t)$ denotes the $t$-th sample, randomly generated from
$\pi$, and the function $\bar{\ell}$ has the following
expression: $\bar{\ell}(y_t,x_t,\theta(\mu_{t})) =
\ell((y_t,x_t,\theta(\mu_{t})) + 1/n\ln p(\theta(\mu_{t}))$. We note that this loss function
satisfies the following equality, which is the base for a stochastic approximation
method \cite{kushner1997stochastic}, $\e{\bar{\ell}(y_t,x_t,\theta(\mu))|\pi}=L(\theta(\mu))$.

Similarly to Amari's natural gradient algorithm \cite{amari1998natural}, the main problem of
sdEM formulated as in Equation (\ref{Equation:sdEMSecondOrder}) is
the computation of the inverse of the Fisher information matrix at
each step, which becomes even prohibitive for large models. The following result shows that this can be
circumvented when we deal with distributions of the exponential family:

\begin{thm}\label{Theorem:Sato}
In the exponential family, the natural gradient of a loss
function with respect to the expectation parameters equals the
gradient of the loss function with respect to the natural
parameters,
\begin{equation*}
I(\mu)^{-1}\frac{\partial \bar{\ell}(y,x,\theta(\mu))}{\partial
\mu} = \frac{\partial \bar{\ell}(y,x,\theta)}{\partial \theta}
\end{equation*}
\end{thm}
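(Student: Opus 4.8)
The plan is to recognize that the asserted identity is nothing more than Sato's Equation (\ref{Equation:Sato}) with the discriminative loss $\bar\ell$ in place of the marginal log-likelihood, and to observe---this being exactly the insight the authors flag at the end of Section~\ref{Section:OnlineEM}---that the derivation of that equation never used the specific form of the loss, only the geometry of the $\theta\leftrightarrow\mu$ transformation. So I would simply reproduce that derivation for an arbitrary $\bar\ell$. First I would fix the observation $(y,x)$ and regard $\bar\ell(y,x,\theta)$ as a differentiable function of the natural parameter $\theta$. Because the Legendre--Fenchel transform of Equations (\ref{Equation:1})--(\ref{Equation:3}) makes $\theta\leftrightarrow\mu$ a smooth one-to-one correspondence, the very same quantity can equivalently be viewed as a function of the expectation parameter through the substitution $\mu=\mu(\theta)$.

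Next I would apply the multivariate chain rule to this composition,
\[
\frac{\partial \bar\ell(y,x,\theta)}{\partial \theta} = \left(\frac{\partial \mu}{\partial \theta}\right)^{\top}\frac{\partial \bar\ell(y,x,\theta(\mu))}{\partial \mu},
\]
with the Jacobian $\partial\mu/\partial\theta$ evaluated at $\mu=\mu(\theta)$. The decisive step is to identify this Jacobian using the first line of Equation (\ref{Equation:ExponentialIdentities}): $\partial\mu/\partial\theta=\partial^2 A_l(\theta)/\partial\theta\partial\theta=I(\theta)=I(\mu)^{-1}$. Since $\partial\mu/\partial\theta$ is the Hessian of the log partition function it is symmetric, so the transpose may be dropped, and substituting $I(\mu)^{-1}$ for the Jacobian yields $\partial\bar\ell(y,x,\theta)/\partial\theta = I(\mu)^{-1}\,\partial\bar\ell(y,x,\theta(\mu))/\partial\mu$, which is precisely the claimed equality (with its two sides interchanged). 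The inverse $I(\mu)^{-1}$ exists throughout because the transformation is one-to-one, so $I(\mu)$ is nonsingular.

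I do not anticipate any real obstacle, since the statement is purely about the parameter geometry and not about any property of $\bar\ell$ beyond differentiability; the only points needing care are bookkeeping ones---applying the chain rule in the correct direction and justifying, via symmetry of the Hessian $I(\theta)$, that the transpose can be discarded. It is worth stressing why this matters for sdEM: because the identity holds verbatim for the discriminative $\bar\ell$, the natural-gradient update of Equation (\ref{Equation:sdEMSecondOrder}) can be computed simply as the ordinary gradient of $\bar\ell$ in the natural parameters, so that the prohibitive inversion of the Fisher information matrix is avoided entirely.
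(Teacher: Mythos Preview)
Your proposal is correct and follows essentially the same approach as the paper: apply the chain rule and identify the Jacobian $\partial\mu/\partial\theta$ with $I(\theta)=I(\mu)^{-1}$ via Equation~(\ref{Equation:ExponentialIdentities}), exactly as in Sato's Equation~(\ref{Equation:Sato}), noting that nothing about this derivation depends on the particular loss. The only element the paper adds that you leave implicit is a preliminary remark that $I(\mu)$ is a bona fide Riemannian metric on the expectation-parameter space (by invariance of the Fisher metric under the one-to-one reparameterization $\theta\mapsto\mu$), so that the left-hand side genuinely deserves the name ``natural gradient''; but the algebraic identity itself is established just as you describe.
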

\begin{proof}[Sketch of the proof]
We firstly need to prove that $I(\mu)$ is a valid Riemannian
tensor metric and, hence, the expectation parameter space has a
Riemanian structure defined by the metric $I(\mu)$ and the
definition of the natural gradient makes sense. This can be proved
by the invariant property of the Fisher information metric to
one-to-one reparameterizations or, equivalently, transformations in
the system of coordinates \cite{amari1985differential,amari2007methods}. $I(\mu)$ is a
Riemannian metric because it is the Fisher information matrix of
the reparameterized model $p(y,x|\theta(\mu))$, and the
reparameterization is one-to-one, as commented in Section
\ref{Section:DualParameters}.

The equality stated in the theorem follows directly from
Sato's derivation of the online EM algorithm (Equation
(\ref{Equation:Sato})). This derivation shows that we can avoid
the computation of $I(\mu)^{-1}$ by using the natural parameters
instead of the expectation parameters and the function $\theta(\mu)$.
\end{proof}

Theorem 1 simplifies the sdEM's updating equation to, 
\begin{equation}\label{Equation:sdEMFirstOrder}
\mu_{t+1} = \mu_{t} - \rho_t\frac{\partial
\bar{\ell}(y_t,x_t,\theta(\mu_{t}))}{\partial \theta}
\end{equation}
sdEM can be interpreted as a stochastic gradient descent
algorithm iterating over the \emph{expectation parameters} and
guided by the natural gradient in this Riemannian space.

\begin{algorithm}[!htbp]
\caption{Stochastic Discriminative EM (sdEM)} \label{alg:sdEM}
\begin{algorithmic}[1]
\REQUIRE  $D$ is randomly shuffled.

\STATE $\mu_{0}=\bar{\alpha}$; \hspace{15mm}(initialize
according to the prior)

\STATE $\theta_{0}=\theta(\mu_{0})$;

\STATE $t=0$;

\REPEAT

\FOR{ $i=1,\ldots,n$}

\STATE \textbf{E-Step}: \hspace{7mm} $\mu_{t+1} = \mu_{t} -
\frac{1}{(1+\lambda t)}\frac{\partial
\bar{\ell}(y_i,x_i,\theta_{t})}{\partial \theta};$

\vspace{3mm} \STATE Check-Step: \hspace{1mm}
$\mu_{t+1}=Check(\mu_{t+1},\mathcal{S})$;

\vspace{3mm} \STATE \textbf{M-Step}: \hspace{6mm}
    $\theta_{t+1}=\theta(\mu_{t+1});$

\STATE $t=t+1$;

\ENDFOR

\UNTIL convergence

\RETURN $\theta(\mu_{t})$;

\end{algorithmic}
\end{algorithm}
An alternative proof to Theorem \ref{Theorem:Sato} based on  more recent results on information geometry has been recently given in \cite{raskutti2013information}. The results of that work indicate that sdEM could also be interpreted as a mirror descent algorithm with a Bregman divergence as a proximitiy measure. It is beyond the scope of the paper to explore this relevant connection.

\subsection{CONVERGENCE OF sdEM}\label{Secton:sdEMConvergence}
In this section we do not attempt to give a formal proof of the
convergence of sdEM, since very careful technical arguments would be
needed for this purpose \cite{bottou1998online}. We simply go through
the main elements that define the convergence of sdEM as an stochastic approximation method \cite{kushner1997stochastic}. 

According to
Equation (\ref{Equation:sdEMSecondOrder}), sdEM can be seen as a
stochastic gradient descent method with the inverse of the Fisher
information matrix $I(\mu)^{-1}$ as a coefficient matrix
\cite{bottou1998online}. As we are dealing with exponential families,
these matrices are always positive-definite. Moreover, if the
gradient  $\partial\bar{\ell}(y,x,\theta)/\partial\theta$ can
be computed exactly (in Section \ref{Section:PartiallyObservable} we discuss what happens when
this is not possible), from Theorem \ref{Theorem:Sato}, we have
that it is an unbiased estimator of the natural gradient of
the $L(\theta(\mu))$ defined in Equation \ref{Equation:sdEMPiorLoss},
\begin{equation}\label{Equation:UnbiasedEstimator}
\e{\frac{\partial \bar{\ell}(y,x,\theta)}{\partial \theta}|\pi} =
I(\mu)^{-1}\frac{\partial L(\theta(\mu))}{\partial\mu}
\end{equation}


\begin{table*}[!htbp]
\caption{\label{Table:Losses} sdEM updating equations for fully observed data (Section \ref{Section:Losses}) .}
\begin{center}
\begin{tabular}{lr}
\hline
\hline
\textbf{Loss} &  \textbf{sdEM equation}\\
\hline \\
NLL & $\mu_{t+1} = (1-\rho_t(1+\frac{\nu}{n}))\mu_{t} +
\rho_t\left(s(y_t,x_t)+ \frac{1}{n}\bar{\alpha} \right)$
\\\\\hline\\
NCLL &  $\mu_{t+1} = (1-\rho_t\frac{\nu}{n})\mu_{t} +\rho_t \left(
s(y_t,x_t) - E_y[s(y,x_t)|\theta(\mu_{t})]  +
\frac{1}{n}\bar{\alpha} \right)$
\\\\\hline\\
Hinge & $\mu_{t+1} =  (1-\rho_t\frac{\nu}{n})\mu_{t}
+\rho_t\left\{\begin{array}{c l}
\frac{1}{n}\bar{\alpha} & \textbf{if}\hspace{2mm}\ln\frac{p(y_t,x_t|\theta)}{p(\bar{y}_t,x_t|\theta)}>1\\
s(y_t,x_t) - s(\bar{y}_t,x_t) + \frac{1}{n}\bar{\alpha}
&\textbf{otherwise}
\end{array}\right.$
\\\\
& \multicolumn{1}{r}{where $\bar{y}_t = arg\max_{y\neq y_t}
p(y,x_t|\theta)$}
\\\\\hline
\end{tabular}
\end{center}
\end{table*}

However, one key difference in terms of convergence between online EM and sdEM can be seen in
Equation (\ref{Equation:SatoIteration}): $\mu_{t+1}$ is a convex
combination between $\mu_t$ and the expected sufficient
statistics. Then, $\mu_{t+1}\in\mathcal{S}$ during all the
iterations. As will be clear in the next section, we do not have this same
guarantee in sdEM, but we can take advantage of the log prior term of Equation (\ref{Equation:sdEMPiorLoss}) to avoid this problem. This term plays a dual role as both ``regularization'' term and log-barrier function \cite{wright1999numerical} i.e. a continuous
function whose value increases to infinity as the parameter
approaches the \emph{boundary of the feasible region} or the support of
$p(\theta(\mu)|\alpha)$ \footnote{The prior $p$ would need to be suitably chosen.}. Then, if the step sizes $\rho_t$ are
small enough (as happens near convergence), sdEM will always stays
in the feasible region $\mathcal{S}$, due to the effect of the log
prior term. The only problem is that, in the initial iterations, the
step sizes $\rho_t$ are large, so one iteration can jump out of
the boundary of $\mathcal{S}$.  The method to avoid that depends on the
particular model, but for the models examined in this work it seems to be 
a simple check in every iteration. For example, as we
will see in the experimental section when implementing a
multinomial Naive Bayes, we will check at every iteration that each
sufficient statistic or ``word count'' is always positive. If a
``word count'' is negative at some point, we will set it to a very small value.
As mentioned above, this does not hurt the convergence of sdEM
because in the limit this problem disappears due the effect of the
log-prior term.

The last ingredient required to assess the convergence of a
stochastic gradient descent method is to verify that the sequence
of step sizes satisfies: $\sum \rho_t = \infty, \hspace{3mm} \sum
\rho_t^2 < \infty$.

So, if the sequence $(\mu_{t})_{t\geq 0}$ converges, it will
probably converge to the global minimum $(\mu^\star$,
$\theta^\star=\theta(\mu^\star))$ if $L(\theta)$ is convex, or to
a local minimum if $L(\theta)$ is not convex \cite{bottou1998online}.



Finally, we give an algorithmic description of sdEM in Algorithm
\ref{alg:sdEM}. Following \cite{bottou2012stochastic}, we consider steps
sizes of the form $\rho_t=(1+\lambda t)^{-1}$, where $\lambda$ is a
positive scalar\footnote{Our experiments suggest that trying
$\lambda\in \{1,0.1,0.01,$ $0.001,\ldots\}$ suffices for obtaining a quick
convergence.}. As mentioned above, the ``Check-Step'' is introduced to guarantee that
$\mu_t$ is always in $\mathcal{S}$. Like the online EM algorithm \cite{sato2000convergence,cappe2009line}, Algorithm
\ref{alg:sdEM} resembles the classic \emph{expectation
maximization} algorithm \cite{dempster1977maximum} since, as we will see in the next
section, the gradient is computed using \emph{expected
sufficient statistics}. Assumption \ref{Assumption:ML} guarantees
that the maximization step can be performed efficiently. This step
differentiates sdEM from classic stochastic gradient descent
methods, where such a computation does not exist.

\subsection{DISCRIMINATIVE LOSS FUNCTIONS}\label{Section:Losses}
As we have seen so far, the derivation of sdEM is complete
except for the definition of the loss function. We will discuss
now how two well known \emph{discriminative loss functions} can be
used with this algorithm.

\begin{table*}[!htbp]
\caption{\label{Table:HiddenLosses} sdEM updating equations for partially observed data (Section \ref{Section:PartiallyObservable}) }
\begin{center}
\begin{tabular}{lr}
\hline
\hline
\textbf{Loss} &  \textbf{sdEM equation}\\
\hline \\
NLL & $\mu_{t+1} = (1-\rho_t(1+\frac{\nu}{n}))\mu_{t} +
\rho_t\left( E_z[s(y_t,z,x_t)|\theta(\mu_{t})]+
\frac{1}{n}\bar{\alpha} \right)$
\\\\\hline\\
NCLL & $\begin{array}{ccl} \mu_{t+1} &=&
(1-\rho_t\frac{\nu}{n})\mu_{t} +
\rho_t\left(E_z[s(y_t,z,x_t)|\theta(\mu_{t})] -
E_{yz}[s(y,z,x_t)|\theta(\mu_{t})] + \frac{1}{n}\bar{\alpha}
\right)
\end{array}$
\\\\\hline\\

Hinge & $\mu_{t+1} =  (1-\rho_t\frac{\nu}{n})\mu_{t}
+\rho_t\left\{\begin{array}{c l}
\frac{1}{n}\bar{\alpha} & \textbf{if}\hspace{2mm}\ln\frac{\int p(y_t,z,x_t|\theta)dz}{\int p(\bar{y}_t,z,x_t|\theta)dz}>1\\
\begin{array}{l} E_z[s(y_t,z,x_t)|\theta(\mu_{t})]\\-E_z[s(\bar{y}_t,z,x_t)|\theta(\mu_{t})]
+ \frac{1}{n}\bar{\alpha}\end{array} &\textbf{otherwise}
\end{array}\right.$\\\\
& \multicolumn{1}{r}{where $\bar{y}_t = arg\max_{y\neq y_t} \int
p(y,z,x_t|\theta)dz$}
\\\\
\hline
\end{tabular}
\end{center}
\end{table*}

\subsubsection*{Negative Conditional Log-likelihood (NCLL)}\label{Section:NCLL}

As mentioned above, this loss function is defined as follows:
\begin{eqnarray*}
\ell_{CL}(y_t,x_t,\theta) &=&-\ln p(y_t,x_t|\theta) + \ln \int
p(y,x_t|\theta)dy
\end{eqnarray*}
And its gradient is computed as
\begin{eqnarray*}
\frac{\partial
\ell_{CL}(y_t,x_t,\theta)}{\partial\theta}=-s(y_t,x_t) +
E_y[s(y,x_t)|\theta]
\end{eqnarray*}
\noindent where the sufficient statistic $s(y_t,x_t)$ comes from
the gradient of the $\ln p(y_t,x_t|\theta)$ term in the NCLL loss,
and the expected sufficient statistic $E_y[s(y,x_t)|\theta] = \int
s(y,x_t) p(y|x_t,\theta)dy$, comes from the gradient of the $\ln
\int p(y,x_t|\theta)dy$ term in the NCLL loss. As mentioned above,
the computation of the gradient is similar to the
\emph{expectation step} of the classic EM algorithm.

The iteration equation of sdEM for the NCLL loss is detailed
in Table \ref{Table:Losses}. We note that in the case of
multi-class prediction problems the integrals of the updating
equation are replaced by sums over the different classes of the
class variable $Y$. We also show the updating equation for the negative
log-likelihood (NLL) loss for comparison purposes.

\subsubsection*{The Hinge loss}\label{Section:Hinge}
Unlike the previous loss which is valid for continuous and discrete (and vector-valued) predictions, this loss is only valid for binary or multi-class classification problems.

Margin-based loss functions have been extensively used and studied by the machine
learning community for binary and multi-class classification
problems \cite{bartlett2006convexity}. However, in our view, the application of
margin-based losses (different from the negative conditional
log-likelihood) for discriminative training of probabilistic
generative models is scarce and based on ad-hoc learning methods
which, in general, are quite sophisticated \cite{PernkopfWT12}. In this section, we
discuss how sdEM can be used to minimize the empirical risk of one
of the most used margin-based losses, the Hinge loss, in binary
and multi-class classification problems. But, firstly, we discuss
how Hinge loss can be defined for probabilistic generative models.


We build on LeCun et al.'s ideas \cite{lecun2006tutorial} about energy-based learning
for prediction problems. 
LeCun et al. \cite{lecun2006tutorial}  define the Hinge loss for
energy-based models as follows,
\begin{equation*}
\max(0,1-(E(\bar{y}_t,x_t,w) - E(y_t,x_t,w))
\end{equation*}
\noindent where $E(\cdot)$ is the energy function parameterized by
a parameter vector $w$, $E(y_t,x_t,w)$ is the energy associated to
the correct answer $y_t$ and $E(\bar{y}_t,x_t,w)$ is the energy
associated to the most offending incorrect answer, $\bar{y}_t =
arg\min_{y\neq y_t} E(y,x_t,w)$. Predictions $y^\star$ are made using $y^\star=arg\min_{y}
E(y,x_t,w^\star)$ when the parameter $w^\star$ that minimizes the empirical risk is found.


In our learning settings we consider the minus logarithm of
the joint probability, $-\ln p(y_t,x_t|\theta)$, as an energy function.
In consequence, we define the hinge loss as follows
\begin{equation}
\ell_{hinge}(y_t,x_t,\theta) =
\max(0,1-\ln\frac{p(y_t,x_t|\theta)}{p(\bar{y}_t,x_t|\theta)})
\end{equation}
\noindent where $\bar{y}_t$ denotes here too the most offending
incorrect answer, $\bar{y}_t = arg\max_{y\neq y_t}
p(y,x_t|\theta)$.

The gradient of this loss function can be simply computed as
follows
\begin{eqnarray*}
\frac{\partial
\ell_{hinge}(y_t,x_t,\theta)}{\partial\theta}=\left\{\begin{array}{l}
0\hspace{20mm}\textbf{if}\hspace{2mm}\ln\frac{
p(y_t,x_t|\theta)}{p(\bar{y}_t,x_t|\theta)}>1\\\\
-s(y_t,x_t) + s(\bar{y}_t,x_t) \hspace{2mm} \textbf{otherwise}
\end{array}\right.
\end{eqnarray*}
\noindent and the iteration equation for minimizing the empirical risk of the Hinge loss is also given in Table
\ref{Table:Losses}.


\begin{figure*}[!htbp]
\caption{\label{Figure:ToyExample} Toy example (Section \ref{Section:ToyExample}). The  result using the NLL loss (i.e. MLE estimation)
is plotted with dashed lines which represent
the densities $p(y=k)N(x,\mu^{(k)},\sigma^{(k)})$ for both classes
(i.e. when the red line is higher than the blue line we predict
the red class and vice versa). The estimated prediction accuracy of
the MLE model is 78.6\%. Solid lines represent the same
estimation but using the NCLL and the Hinge loss. Their estimated
prediction accuracies are 90.4\% and 90.6\%, respectively.}
\begin{center}
\begin{tabular}{cc}
\epsfig{file=./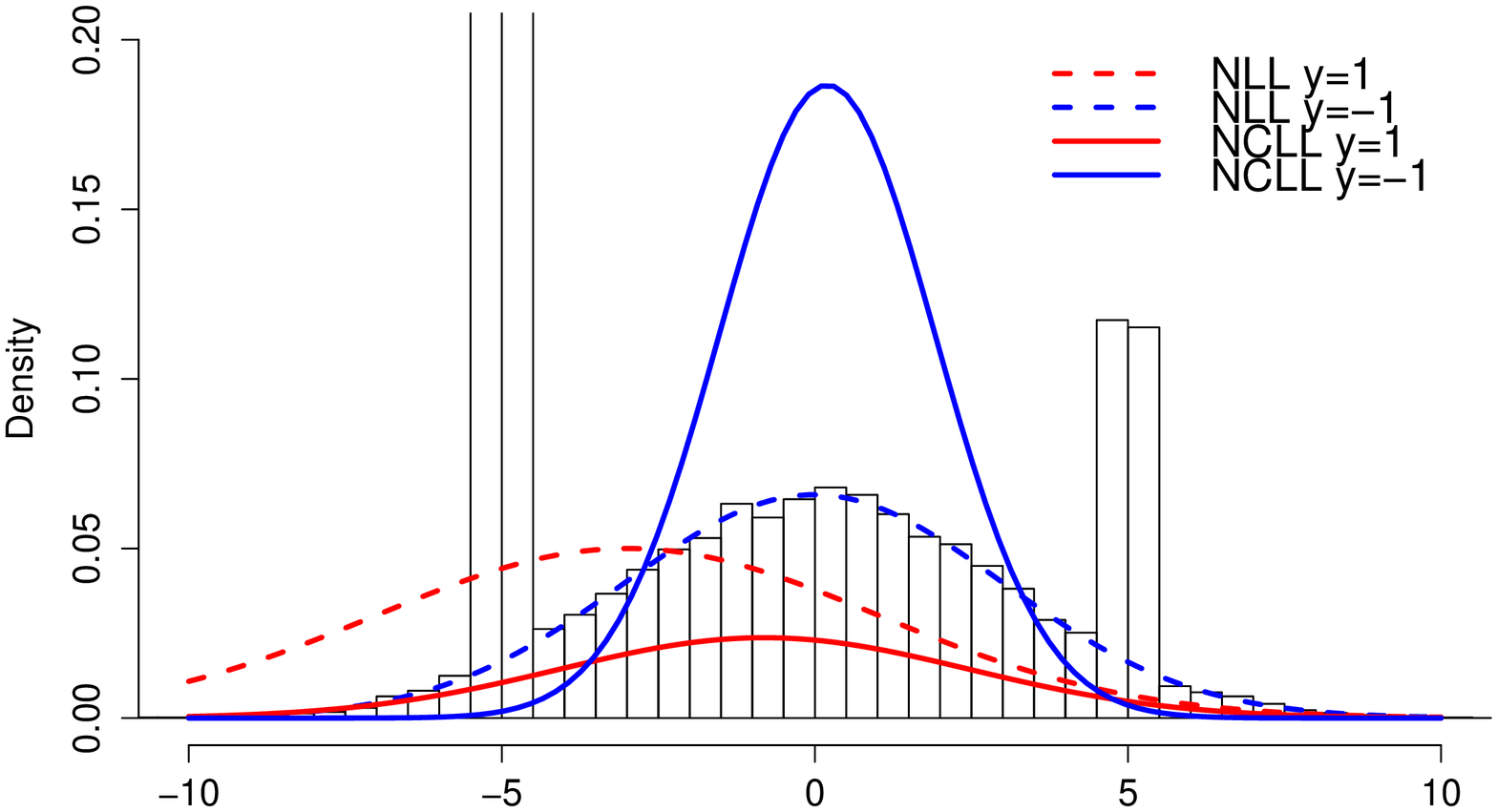,scale=0.35} &
\epsfig{file=./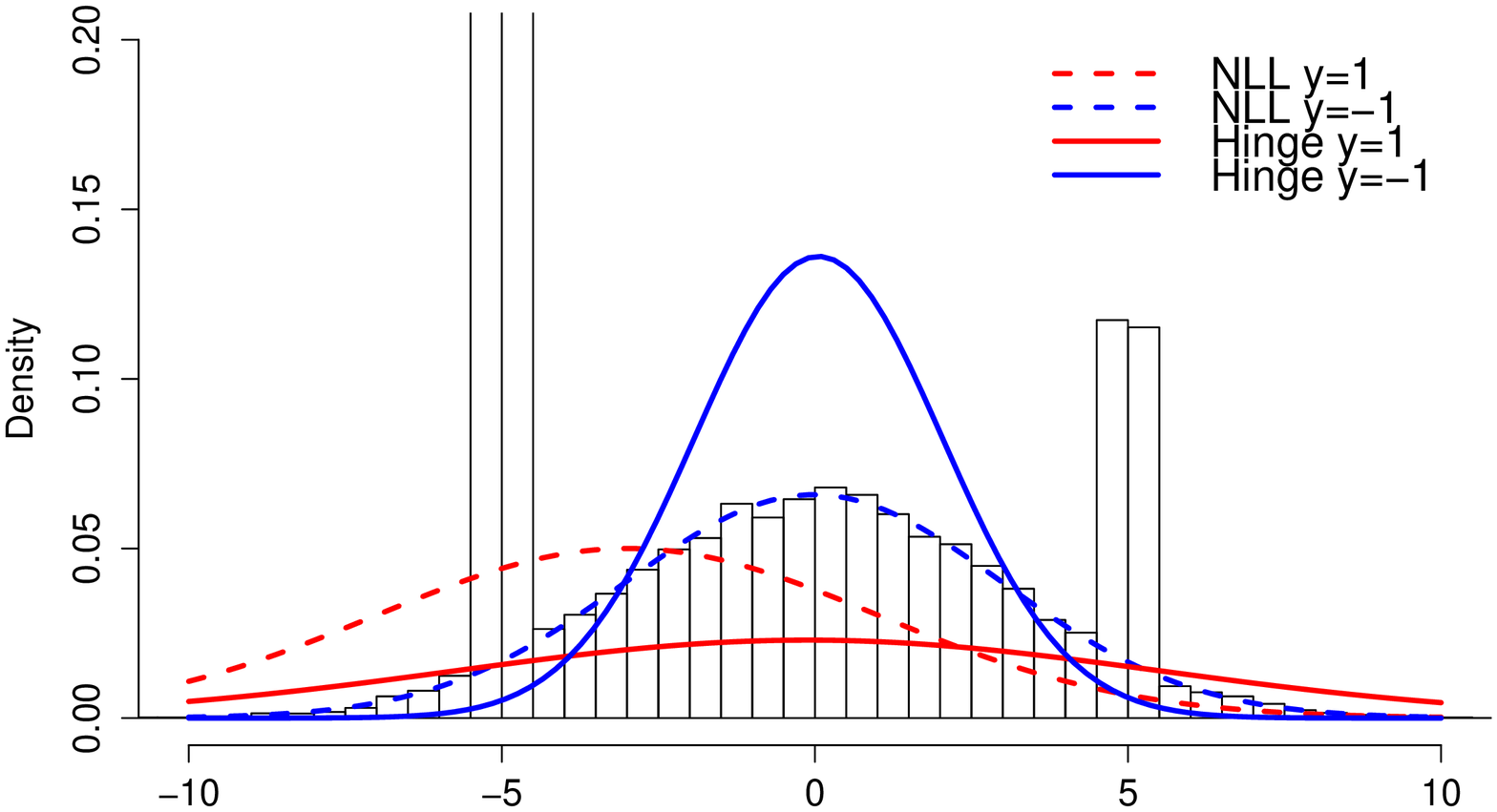,scale=0.35}\\
(a) NCLL Loss & (b) Hinge Loss\\
\end{tabular}
\end{center}
\end{figure*}

\subsection{PARTIALLY OBSERVABLE DATA}\label{Section:PartiallyObservable}


The generalization of sdEM to partially observable data is
straightforward. We denote by $Z$ the vector of non-observable
variables. sdEM will
handle statistical models which define a probability distribution
over $(y,z,x)$ which belongs to
the exponential family (Assumption \ref{Assumption:NEF}).
Assumption \ref{Assumption:ConjugatePrior} and \ref{Assumption:ML}
remain unaltered.

The tuple $(y,z,x)$ will denote the complete event or complete
data, while the tuple $(y,x)$ is the observed event or the observed
data. So we assume that our given data set
$D$ with $n$ observations is expressed as
$\{(y_1,x_1),\ldots,(y_n,x_n)\}$. So sdEM's Equation
(\ref{Equation:sdEMSecondOrder}) and
(\ref{Equation:sdEMFirstOrder}) are the same, with the only
difference that the natural gradient is now defined using the
inverse of the Fisher information matrix for the statistical model
$p(y,z,x|\theta(\mu))$. The same happens for Theorem
\ref{Theorem:Sato}.

The NCLL loss and the Hinge loss are equally defined as in Section \ref{Section:Losses}, with the
only difference that the computation of $p(y_t,x_t|\theta)$ and $p(x_t|\theta)$
requires marginalization over $z$, $p(y_t,x_t|\theta) = \int
p(y_t,z,x_t|\theta)dz$, $p(x_t|\theta) = \int
p(y,z,x_t|\theta)dydz$. The updating equations for sdEM under
partially observed data for the NCLL and Hinge loss are detailed
in Table \ref{Table:HiddenLosses}. New expected sufficient statistics
need to be computed, $E_{z}[s(y_t,z,x_t)|\theta] = \int
s(y_t,z,x_t)p(z|y_t,x_t,\theta)dz$ and $E_{yz}[s(y,z,x_t)|\theta]$
$= \int s(y,z,x_t)p(y,z|x_t,\theta)dydz$. As previously,  we also show 
the updating equation for the negative
log-likelihood (NLL) loss for comparison purposes.

\subsection{sdEM AND APPROXIMATE INFERENCE}\label{Section:missing}
For many interesting models \cite{blei2003latent}, the computation of the expected
sufficient statistics in the iteration equations shown in Table
\ref{Table:Losses} and \ref{Table:HiddenLosses} cannot be
computed in closed form. This is not a problem as far as we can
define \emph{unbiased estimators} for these expected sufficient
statistics, since the equality of Equation
(\ref{Equation:UnbiasedEstimator}) still holds. As it will be shown
in the next section, we use sdEM to discriminatively train
\emph{latent Dirichlet allocation} (LDA) models \cite{blei2003latent}. Similarly to \cite{RohdeCappe2011},
for this purpose we employ collapsed Gibbs sampling to compute the expected sufficient
statistics, $E_z[s(y_t,z,x_t)|\theta]$, as it guarantees
that at convergence samples are i.i.d. according to
$p(z|y_t,x_t,\theta)$.

\section{EXPERMINTAL ANALYSIS}\label{Section:Experiments}

\subsection{TOY EXAMPLE}\label{Section:ToyExample}

We begin the experimental analysis of sdEM by learning a very
simple Gaussian naive Bayes model composed by a binary class variable
$Y$ and a single continuous predictor $X$. Hence, the conditional
density of the predictor given the class variable is assumed to be
normally distributed. The interesting part of this toy example is
that the training data is generated by a different model:
$\pi(y=-1)=0.5$, $\pi(x|y=-1)\sim N(0,3)$ and $\pi(x|y=1)\sim
0.8\cdot N(-5,0.1) + 0.2\cdot N(5,0.1)$. Figure \ref{Figure:ToyExample} shows the
histogram of the 30,000 samples generated from the $\pi$
distribution. The result is a mixture of 3 Gaussians, one in the
center with a high variance associated to $y=-1$ and two narrows
Gaussians on both sides associated to $y=1$.

sdEM can be used by considering 6 (non-minimal) sufficient
statistics: $N^{(-1)}$ and $N^{(1)}$ as ``counts'' associated to
both classes, respectively; $S^{(-1)}$ and $S^{(1)}$ as the
``sum'' of the $x$ values associated to classes $y=-1$ and $y=1$,
respectively; and $V^{(-1)}$ and $V^{(1)}$ as the ``sum of
squares'' of the $x$ values for each class. We also have five parameters which
are computed from the sufficient statistics as follows: Two for
the prior of class
$p(y=-1)=p^{(-1)}=N^{(-1)}/(N^{(-1)}+N^{(1)})$ and
$p^{(1)}=N^{(1)}/(N^{(-1)}+N^{(1)})$; and four for the two
Gaussians which define the conditional of $X$ given $Y$,
$\mu^{(-1)}=S^{(-1)}/N^{(-1)}$,
$\sigma^{(-1)}=\sqrt{V^{(-1)}/N^{(-1)} - (S^{(-1)}/N^{(-1)})^2}$,
and equally for $\mu^{(1)}$ and $\sigma^{(1)}$.

The sdEM's updating equations for the NCLL loss can be written as follows
\begin{eqnarray*}
N^{(k)}_{t+1} = N^{(k)}_{t} + \rho_t(I[y_t=k] -
p_t(k|x_t)) +\frac{\rho_t}{n}\\
S^{(k)}_{t+1} = (1-\frac{\rho_t}{n})S^{(k)}_{t} + \rho_t
x_t\left(I[y_t=k] -
p_t(k|x_t)\right)\\
V^{(k)}_{t+1} = (1-\frac{\rho_t}{n})V^{(k)}_{t} + \rho_t
x_t^2(I[y_t=k] -
p_t(k|x_t))+\frac{\rho_t}{n}\\
\end{eqnarray*}
\noindent where $k$ indexes both classes, $k\in\{-1,1\}$,
$I[\cdot]$ denotes the indicator function, $p_t(k|x_t)$ is an
abbreviation of $p(y=k|x_t,\theta_t)$, and $\theta_t$ is the
parameter vector computed from the sufficient statistics at the
$t$-th iteration.

\begin{figure}[!htbp]
\caption{\label{Figure:MNBTrain}  Convergence trade-off of the Hinge loss versus the NCLL loss and the perplexity for a multinomial naive Bayes model trained minimizing the Hinge loss using sdEM. Circle-lines, triangle-lines and
cross-lines correspond to the results with 20NewsGroup, Cade and Reuters-R52 datasets, respectively. }
\begin{center}
\begin{tabular}{c}
\epsfig{file=./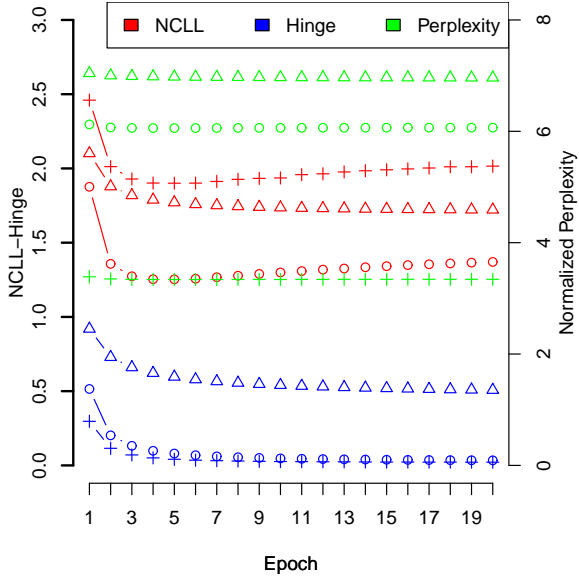,scale=0.6}\\
\end{tabular}
\end{center}
\end{figure}

\begin{figure}[!htbp]
\caption{\label{Figure:MNBTest} Convergence of the classification accuracy for a multinomial naive Bayes model trained minimizing the NCLL loss (NCLL-MNB) and the Hinge loss (Hinge-MNB) using sdEM. Red circle-lines, red triangle-lines and
red cross-lines correspond to the results of NCLL-MNB with 20NewsGroup, Cade and Reuters-R52 datasets, respectively. Same for Hinge-MNB.
The three blue and the three red solid lines detail the accuracy of logistic
regression and SVM, respectively. The three dashed black lines detail the
accuracy of plain MNB with a Laplace prior. }
\begin{center}
\begin{tabular}{c}
\epsfig{file=./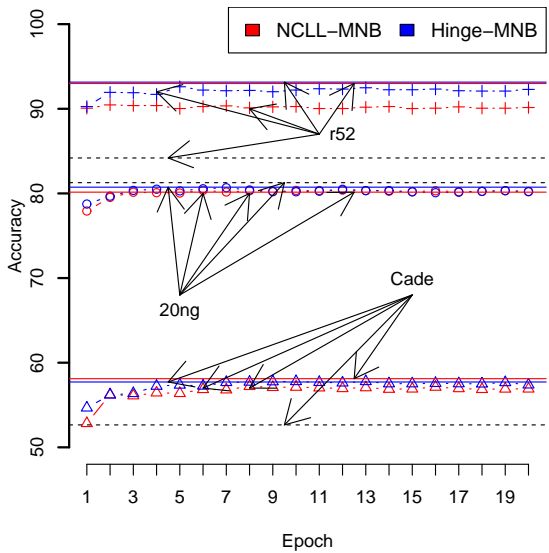,scale=0.57}\\
\end{tabular}
\end{center}
\end{figure}

Similarly, the sdEM's updating equations for the Hinge loss can be
written as follows,
\begin{eqnarray*}
N^{(k)}_{t+1} = N^{(k)}_{t} + k y_t \rho_t  I[\ln\frac{p_t(y_t|x_t)}{p_t(\bar{y_t}|x_t)}<1] +\frac{\rho_t}{n}\\
S^{(k)}_{t+1} = (1-\frac{\rho_t}{n})S^{(k)}_{t} + ky_t\rho_t
x_tI[\ln\frac{p_t(y_t|x_t)}{p_t(\bar{y_t}|x_t)}<1]\\
V^{(k)}_{t+1} = (1-\frac{\rho_t}{n})V^{(k)}_{t} + ky_t\rho_t
x_t^2I[\ln\frac{p_t(y_t|x_t)}{p_t(\bar{y_t}|x_t)}<1]+\frac{\rho_t}{n}\\
\end{eqnarray*}
\noindent where the product $ky_t$ is introduced in the updating
equations to define the sign of the sum, and the indicator
function $I[\cdot]$ defines when the hinge loss is null.


In the above set of equations we have considered as a conjugate prior for the
Gaussians a three parameter Normal-Gamma prior, $\nu=1$ and
$\bar{\alpha}_1=0$  for $S^{(k)}$ and $\bar{\alpha}_2=1$ for $V^{(k)}$
\cite[page 268]{bernardo2009bayesian}, and a Beta prior with $\nu=0$ and $\bar{\alpha}=1$ for $N^{(k)}$.
We note that these priors assign zero probability to
``extreme'' parameters $p^{(k)}=0$ (i.e. $N^{(k)}=0$) and  $\sigma^{(k)}=0$ (i.e. $V^{(k)}/N^{(k)} -
(S^{(k)}/N^{(k)})^2=0$).

Finally, the``Check-step'' (see Algorithm \ref{alg:sdEM}) performed before computing
$\theta_{t+1}$, and which guarantees that all sufficient statistics are
correct, is implemented as follows:
\begin{align*}
N^{(k)}_{t+1} = \max(N^{(k)}_{t+1},\frac{\rho_t}{n})\\
V^{(k)}_{t+1} =
\max(V^{(k)}_{t+1},\frac{(S^{(k)}_{t+1})^2}{N^{(k)}_{t+1}} +
\frac{\rho_t}{n})
\end{align*}
I.e., when the $N^{(k)}$ ``counts" are negative or too small or when the $V^{(k)}$ values lead to negative or null deviations $\sigma^{(k)}\leq 0$, they are fixed with the help of the prior term.

The result of this experiment is given in Figure
\ref{Figure:ToyExample} and clearly shows the different trade-offs
of both loss functions compared to \emph{maximum likelihood
estimation}. It is interesting to see how a generative model which 
does not match the underlying distribution is able to achieve a pretty high prediction accuracy
when trained with a discrimintaive loss function (using the sdEM algorithm).

\subsection{sdEM FOR TEXT CLASSIFICATION}

Next, we briefly show how sdEM can be used to discriminatively
train some generative models used for text classification,
such as multinomial naive Bayes and a similar classifier based on latent Dirichlet allocation models
\cite{blei2003latent}. Supplementary material with full details of these experiments and the Java code used in this evaluation can be download at: \url{http://sourceforge.net/projects/sdem/}

\begin{figure}[!htbp]
\caption{\label{Figure:AccuracyLDA}Convergence of the
classification accuracy of LDA classification models trained by sdEM using different loss functions (NLL, NCLL and Hinge) over 10 different
random initializations. The two dashed lines and the single solid line detail the
maximum, minimum and mean accuracy of sLDA, respectively, over 10
random initializations.}
\begin{center}
\epsfig{file=./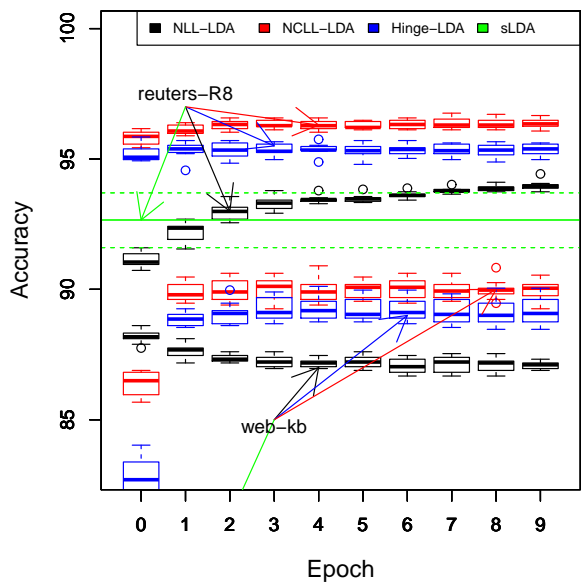,scale=0.6}
\end{center}
\end{figure}

\subsubsection*{Multinomial Naive Bayes (MNB)}
MNB assumes that words in documents with the same class or
label are distributed according to an independent multinomial
distribution.  sdEM can be easily applied to train this model. The sufficient
statistics are the ``prior class counts'' and the ``word counts''
for each class. The updating equations and the check step are
the same as those of $N^{(k)}_{t}$
in the previous toy example. Parameters of the MNB are computed
simply through normalization operations. Two different conjugate
Dirichlet distributions were considered: A ``Laplace prior'' where
$\bar{\alpha}_i=1$; and a ''Log prior'' where $\bar{\alpha}_i=$
``logarithm of the number of words in the corpus''. We only report
analysis for ``Laplace prior'' in the case of NCLL loss and for
``Log prior'' in the case of Hinge loss. Other combinations show
similar results, although NCLL was more sensitive to the chosen
prior.

We evaluate the application of sdEM to MNB with three well-known
multi-class text classification problems:  20Newsgroup (20
classes), Cade (12 classes) and Reuters21578-R52 (52 classes).
Data sets are stemmed. Full details about the data sets and the train/test data sets split used in this evaluation
can be found in \cite{2007:phd-Ana-Cardoso-Cachopo}. 


Figure \ref{Figure:MNBTrain} shows the convergence behavior of sdEM with
$\lambda=$1e-05 
when training
a MNB by minimizing the Hinge loss (Hinge-MNB). 
In this figure, we plot the evolution of the Hinge
loss but also the evolution of the NCLL loss and the normalized perplexity (i.e. the
perplexity measure \cite{blei2003latent} divided by the number of training
documents) at each epoch. We can see that there is a trade-off between the different
losses. E.g., Hinge-MNB decreases the Hinge loss (as expected) but tends
to increase the NCLL loss, while it only decreases perplexity at the very beginning.

Figure \ref{Figure:MNBTest} displays the evolution of the
classification accuracy of two MNBs trained minimizing the NCLL loss and the Hinge loss using sdEM. 
We compare them to: the standard MNB with a
``Laplace prior"; the L2-regularized Logistic
Regression; and the primal L2-regularized SVM. The two later methods were taken from the
Liblinear toolkit v.18 \cite{fan2008liblinear}. As can be seen,
sdEM is able to train simple MNB models with a performance very close
to that provided by highly optimized algorithms.

\subsubsection*{Latent Dirichlet Allocation (LDA)}
We briefly show the results of sdEM when discriminatively training LDA models. We define a classification model equal to MNB,
but where the documents of the same class are now modeled using an
independent LDA model. 
We implement this model by using, apart from the ``prior class counts",  the standard sufficient statistics of the LDA model, i.e. ``words per hidden topic counts", associated to each class label. Similarly to \cite{RohdeCappe2011}, we used an online Collapsed Gibbs sampling method to obtain, at convergence, unbiased estimates of the expected sufficient statistics (see Table \ref{Table:HiddenLosses}).

This evaluation was carried out using the standard train/test split of the Reuters21578-R8 (8 classes) and web-kb (4 classes) data sets
\cite{2007:phd-Ana-Cardoso-Cachopo}, under the same preprocessing than in the MNB's experiments.  Figure \ref{Figure:AccuracyLDA} shows the results of this
comparison using 2-topics LDA models trained with the NCLL loss (NCLL-LDA), the
Hinge loss (Hinge-LDA), and also the NLL loss (NLL-LDA)
following the updating equations of Table
\ref{Table:HiddenLosses}. We compared these results with those returned by supervised-LDA (sLDA) \cite{blei2007supervised}
using the same prior, but this time with 50 topics because less topics produced worse results. We see again how a simple generative model trained with sdEM outperforms much more sophisticated models.

\section{CONCLUSIONS}\label{Section:Conclusion}

We introduce a new learning algorithm for discriminative training
of generative models. This method is based on a novel view of the online EM algorithm as a
stochastic \emph{natural gradient} descent algorithm for minimizing general discriminative loss functions. It allows the
training of a wide set of generative models with or without latent
variables, because the resulting models are always generative. Moreover, sdEM is comparatively simpler and easier to implement (and debug) than other ad-hoc approaches.

\subsubsection*{Acknowledgments}

This work has been partially funded from the European Union's Seventh Framework Programme for research, technological development and demonstration under grant agreement no 619209 (AMIDST project).

\bibliographystyle{plain}
\bibliography{bibliography}

\begin{appendices}		
This supplementary material aims to extend, detail and complement the experimental evaluation of sdEM given in the main paper. The structure of this document is as follows. Section \ref{Section:MNB} details the experimental evaluation of the multinomial naive Bayes classifier and introduces new experiments comparing with the stochastic gradient descent algorithm \cite{bottou2010large}. The experimental evaluation of sdEM applied to latent Dirichlet allocation models is detailed and extended in Section \ref{Section:LDA}. Section \ref{Section:Code} points to the software repository where all the software code used in this experimental evaluation can be downloaded to reproduce all these results.

\section{Multinomial Naive Bayes for text classification}\label{Section:MNB}

\subsection*{Description of the algorithm}

As commented in the main paper, a multinomial Naive Bayes (MNB) classifier assumes that the words of the documents with the same class labels are distributed according to an independent multinomial probability distribution.  In this section we evaluate the use of sdEM to discriminatively train MNB models using the NCLL and the Hinge loss functions. In the first case, such a model would be related to a logistic regression model; while in the second case we will obtain a model directly related to a linear support vector machine classifier \cite{joachims1998text}. 

The general updating equations for this problem can be found in the main paper in Table 2. But a detailed pseudo-code description is now given in Algorithm \ref{alg:sdEM_MNB_NCLL}  for the NCLL loss and in Algorithm \ref{alg:sdEM_MNB_Hinge} for the Hinge loss.  In both cases, the sufficient statistics are the "prior class counts" stored in the matrix $C$ and the "word counts per class" stored in the matrix $N$.  Matrix $M$ is introduced to allow  efficient computations of the  posterior probability of the class variable given a document $d$, $p(Y|d,N,M,C,\gamma)$. How this posterior is computed is detailed in Algorithm \ref{alg:sdEM_MNB_Posterior}. In that way, the computational complexity of processing a label-document pair is linear in the number of words of the document. 
Finally, the function $Normalize(\cdot,\cdot)$ produces a multinomial probability by normalizing the vector of counts. The second argument contains the prior correction considered in this normalization, i.e. the value which is added to each single component of the count vector to avoid null probabilities, similar to what is done in Algorithm \ref{alg:sdEM_MNB_Posterior}.

In both algorithms, we consider a Dirichlet distribution prior for the multinomial distributions. As detailed in the header of these algorithms, two different priors are considered: prior $P1$, with Dirichlet's metaparameters $\alpha_k=1$; and prior $P2$ with $\alpha_k=\ln |W|$, where $|W|$ denotes the total number of different words in the corpus. In both cases, the prior assigns null probability to parameters lying in the "border" of the parameter space (i.e. when a null probability is assigned to some word). 

As commented in the "toy example" of  the main paper in Section 4.1, the parametrization that we chose for this Dirichlet prior makes that the $\nu$ parameter, arising in the exponential family form of this prior (see Assumption 2 of the main paper), be equal to null, $\nu=0$. This can be seen when expressing the Dirichlet distribution in the following exponential form:

\begin{eqnarray*}
Dir(\theta_1,\ldots,\theta_k;\alpha_1,\ldots,\alpha_K)=\frac{\prod_k \Gamma(\alpha_k)}{\Gamma(\sum_k\alpha_k)} \theta_1^{\alpha_1-1}\ldots \theta_K^{\alpha_K-1}\\
= exp\left(\sum_k(\alpha_k - 1)\ln\theta_k +\sum_k\ln\Gamma(\alpha_k) - \ln\Gamma(\sum_k\alpha_k) \right)\\
\end{eqnarray*}

The second and third terms inside the exponent in the above equation correspond to the log partition function $A_g(\alpha)$ of the prior. The first term correspond to the dot product between the sufficient statistics $(\ln\theta_1,\ldots,\ln\theta_K)$  and the natural parameters $(\alpha_1-1,\ldots,\alpha_k-1)$. As can be seen,  the $\nu$ parameter can be obviated in this definition,  i.e. $\nu=0$.

\begin{figure*}[!htbp]
\caption{\label{Figure:NCLLMNBTrain}  Convergence behavior of sdEM
applied to a multinomial naive Bayes model (NCLL-MNB) with different priors. Circle-lines, triangle-lines and
cross-lines correspond to the results with 20NewsGroup, Cade and Reuters-R52 datasets, respectively. In the third figure, the three solid lines detail the accuracy of logistic
regression for these three data sets. The tree dashed lines detail the
accuracy of plain MNB with P1 (MNB results with P2 are omitted because they are much worse).}
\begin{center}
\begin{tabular}{ccc}
\epsfig{file=./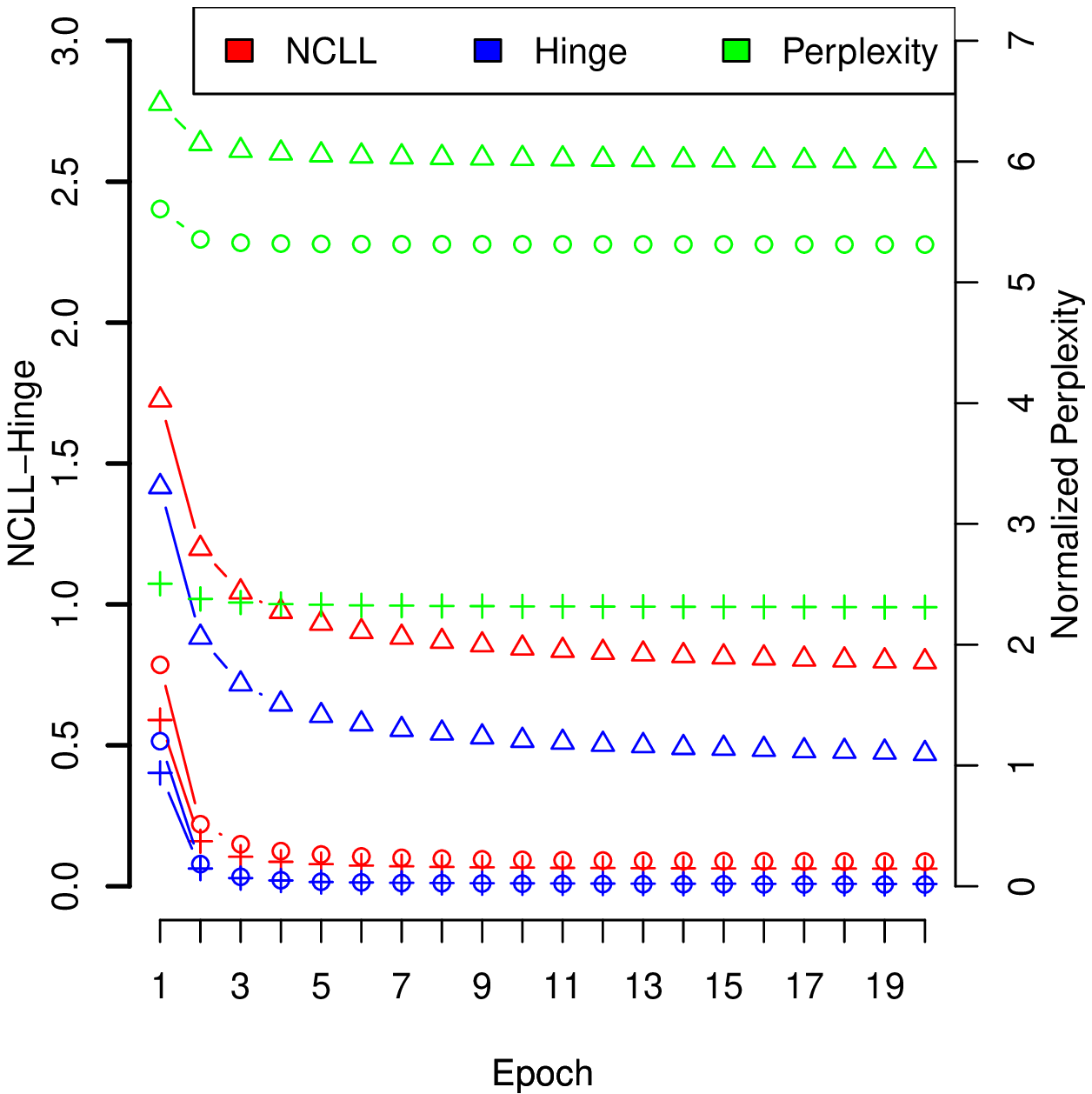,scale=0.43} &
\epsfig{file=./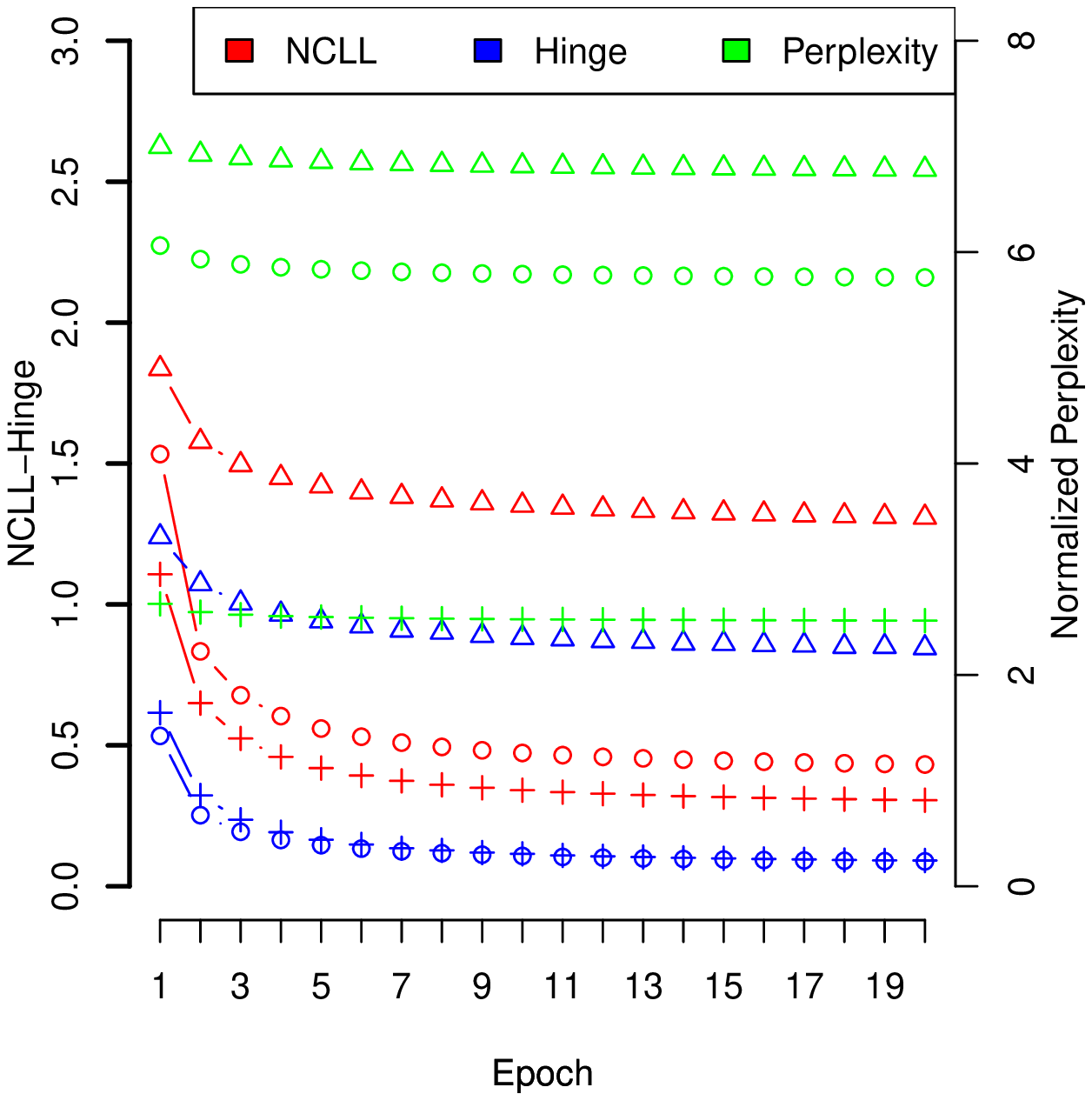,scale=0.43}&
\epsfig{file=./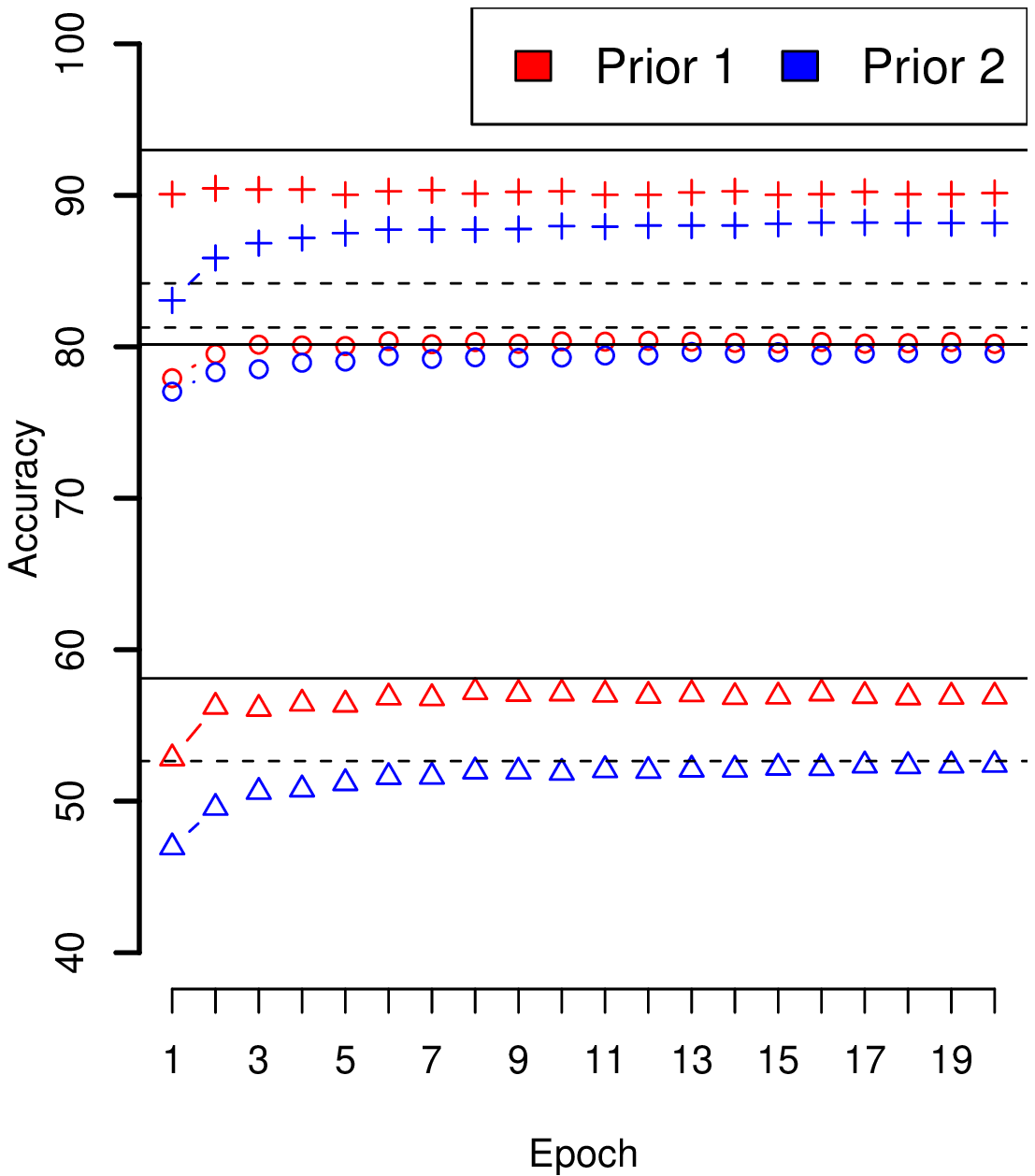,scale=0.43}\\
(a) Converg. of loss functs. with P1 & (b) Converg. of loss functs. with P2 & (c) Converg. of Accuracy\\
\end{tabular}
\end{center}
\end{figure*}

\begin{table}[!htbp]
\caption{\label{Table:Data} Data sets statistics. $|W|$ denotes the number of different words in the data set, $|Y|$ denotes the number of labels of the class variable,  $D_{train}$ and $D_{test}$ the number of documents in the training and the test set, respectively. These data sets can be downloaded, for example, from http://sourceforge.net/projects/sgmweka/.}
\begin{center}
\begin{tabular}{lcccc}
\hline
\textbf{Name} &  $|W|$  & $|Y|$ & $D_{train}$ & $D_{eval}$\\
\hline 
ACL-IMDB & 89527 & 2 & 47000 & 3000\\
Amazon12 & 86914 & 2 & 267875 & 100556\\
Cade  & 157483  & 12 & 27322 & 13661 \\
Reuters-R8   &  14575  &  8  & 2785  & 1396 \\
Reuters-R52  & 16145  & 52  & 5485  & 2189\\
WebKb  & 7287  & 4  & 6532  & 2568\\
20 News-group  & 54580  & 20  &  11293  & 7528\\
\hline
\end{tabular}
\end{center}
\end{table}

\subsection*{Experimental Evaluation}

As detailed in the main paper, we evaluate the application of sdEM to MNB with three well-known
multi-class text classification problems:  20Newsgroup, Cade and Reuters21578-R52. These data sets are stemmed. Full details about the data sets and the train/test data sets split used in this evaluation
can be found in \cite{2007:phd-Ana-Cardoso-Cachopo}. Although Table \ref{Table:Data} shows some of the main statistics of these data sets.

\begin{figure*}[!htbp]
\caption{\label{Figure:HingeMNBTrain} Convergence behavior of sdEM
applied to a multinomial naive Bayes model (Hinge-MNB) with different priors. Circle-lines, triangle-lines and
cross-lines correspond to the results with 20NewsGroup, Cade and Reuters-R52 datasets, respectively. In the third figure, the three solid lines detail the accuracy of SVM for these three data sets. The tree dashed lines detail the
accuracy of plain MNB with P1 (MNB results with P2 are omitted because they are much worse).}
\begin{center}
\begin{tabular}{ccc}
\epsfig{file=./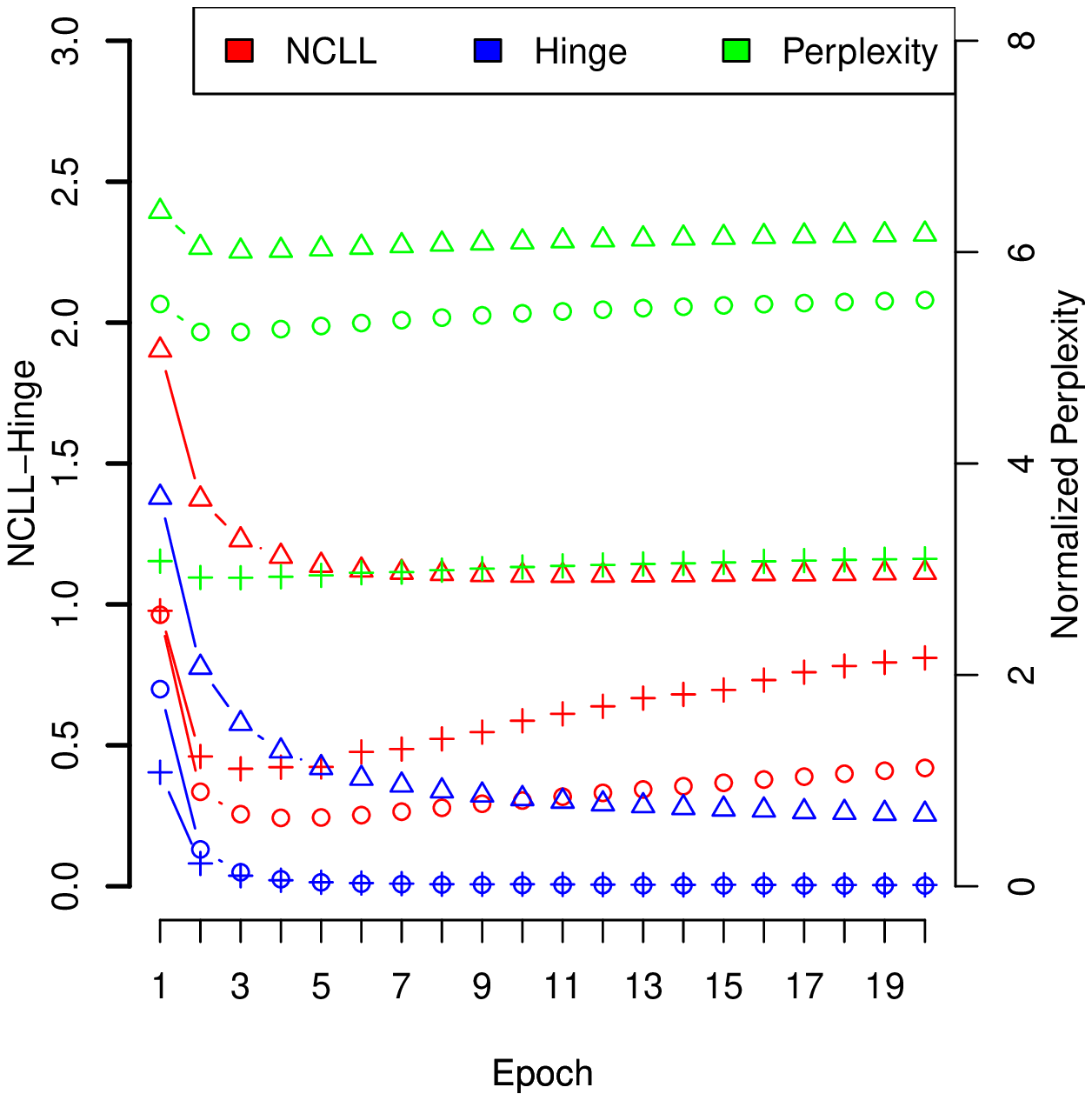,scale=0.43} &
\epsfig{file=./imgs/HingeMultinomialTrainP1.eps,scale=0.43}&
\epsfig{file=./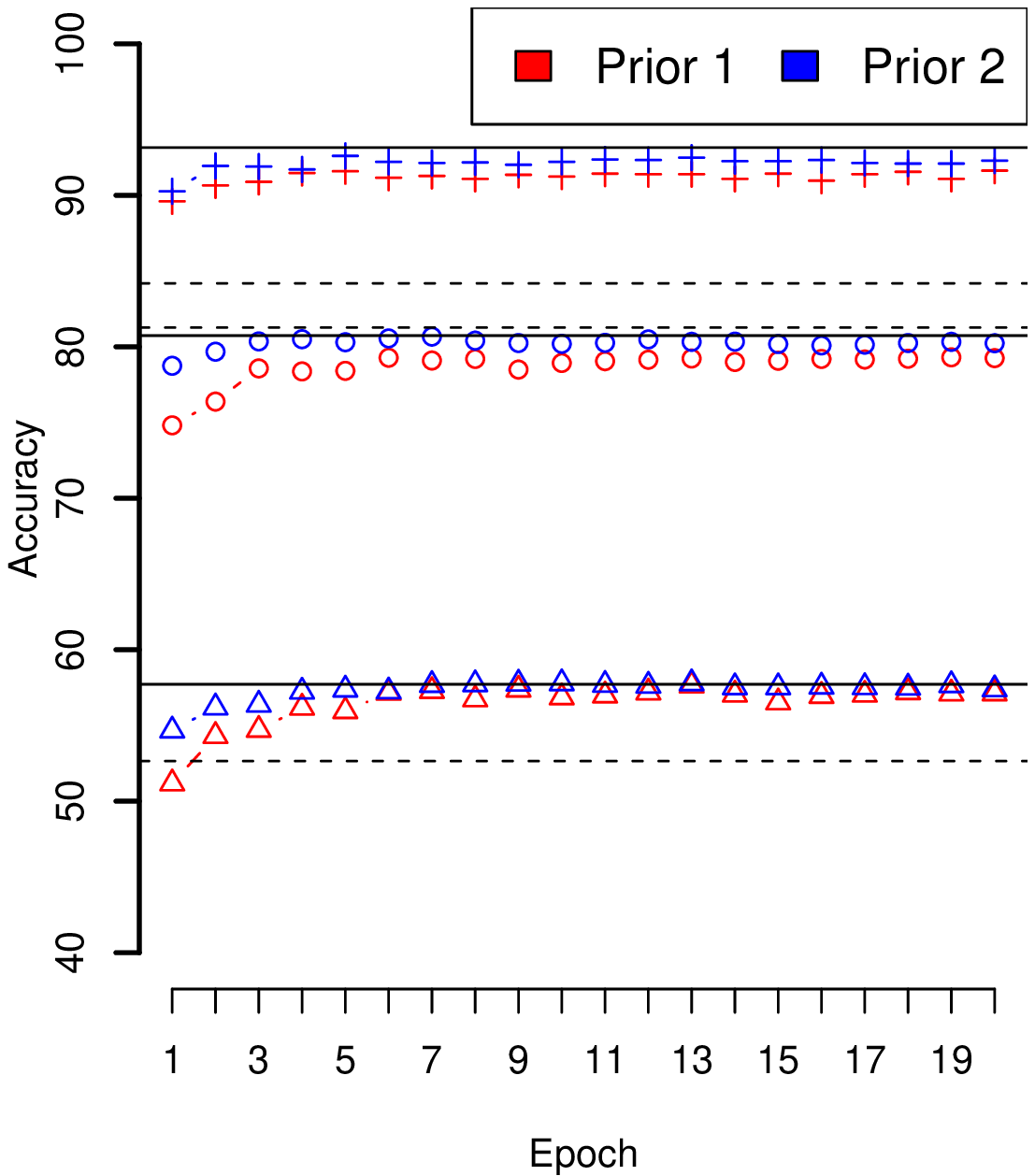,scale=0.43}\\
(a) Converg. of loss functs. with P1 & (b) Converg. of loss functs. with P2 & (c) Converg. of Accuracy\\
\end{tabular}
\end{center}
\end{figure*}

Figure \ref{Figure:NCLLMNBTrain} (a), Figure \ref{Figure:NCLLMNBTrain} (b),  Figure \ref{Figure:HingeMNBTrain} (a) and  Figure \ref{Figure:HingeMNBTrain} (b) show the convergence behavior of sdEM with $\lambda=$1e-05 \footnote{Other values yield similar results and
offer stable convergence, although at lower peace.} 
when training the MNB by minimizing the NCLL loss (NCLL-MNB) and by minimizing the Hinge loss (Hinge-MNB), respectively. 
In both cases, we plot the evolution of the NCLL
loss, the Hinge loss and the normalized perplexity (i.e. the
perplexity measure \cite{blei2003latent} divided by the number of training
documents) of the trained model at each epoch. We can see that there is a trade-off between the different
losses. For example, Hinge-MNB decreases the Hinge loss (as expected) but tends
to increase the NCLL loss, while it only decreases perplexity at the very beginning. This last trend is much stronger when considering the P1 prior. A similar behavior can be observed for NCLL-MNB, with the main difference that the NCLL loss is an upper bound of the Hinge loss, and then when NCLL-MNB minimizes the NCLL loss it also minimizes the Hinge loss. Here it can be also observed that the perplexity remains quite stable specially for P1.

Figure \ref{Figure:NCLLMNBTrain} (c) and Figure \ref{Figure:HingeMNBTrain} (c) displays the evolution of the
classification accuracy for the above models. We compare it to the standard MNB with a
``Laplace prior" \footnote{A ``Log prior'' was also evaluated but
reported much worse results.} 
 and with L2-regularized Logistic
Regression and primal L2-regularized SVM implemented in the
Liblinear toolkit v.18 \cite{fan2008liblinear}. For the case of the NCLL loss, the models seem to be more dependent of the chosen prior, specially for the Cade dataset. In any case, we can see that sdEM is able to train simple MNB models with a performance very close to that provided by highly optimized algorithms.

\begin{figure*}[!htbp]
\caption{\label{Figure:SGDNCLL} Convergence of the classification accuracy for NCLL-SGD and NCLL-MNB for different priors and different learning rates in the Amazon12 and ACL-IMDB data set. Solid lines detail the accuracy of the aforementioned Liblinear's logistic regression and dashed lines detail the accuracy of the plain MNB with the corresponding prior. The numbers of the bottom right legends correspond to different $\lambda$ values (see Section 3.2 of the main paper), which defines how the sdEM's learning rates $\rho_t$ decreases over time.}
\begin{center}
\begin{tabular}{ccc}
\multicolumn{3}{c}{Amazon12}\\
\epsfig{file=./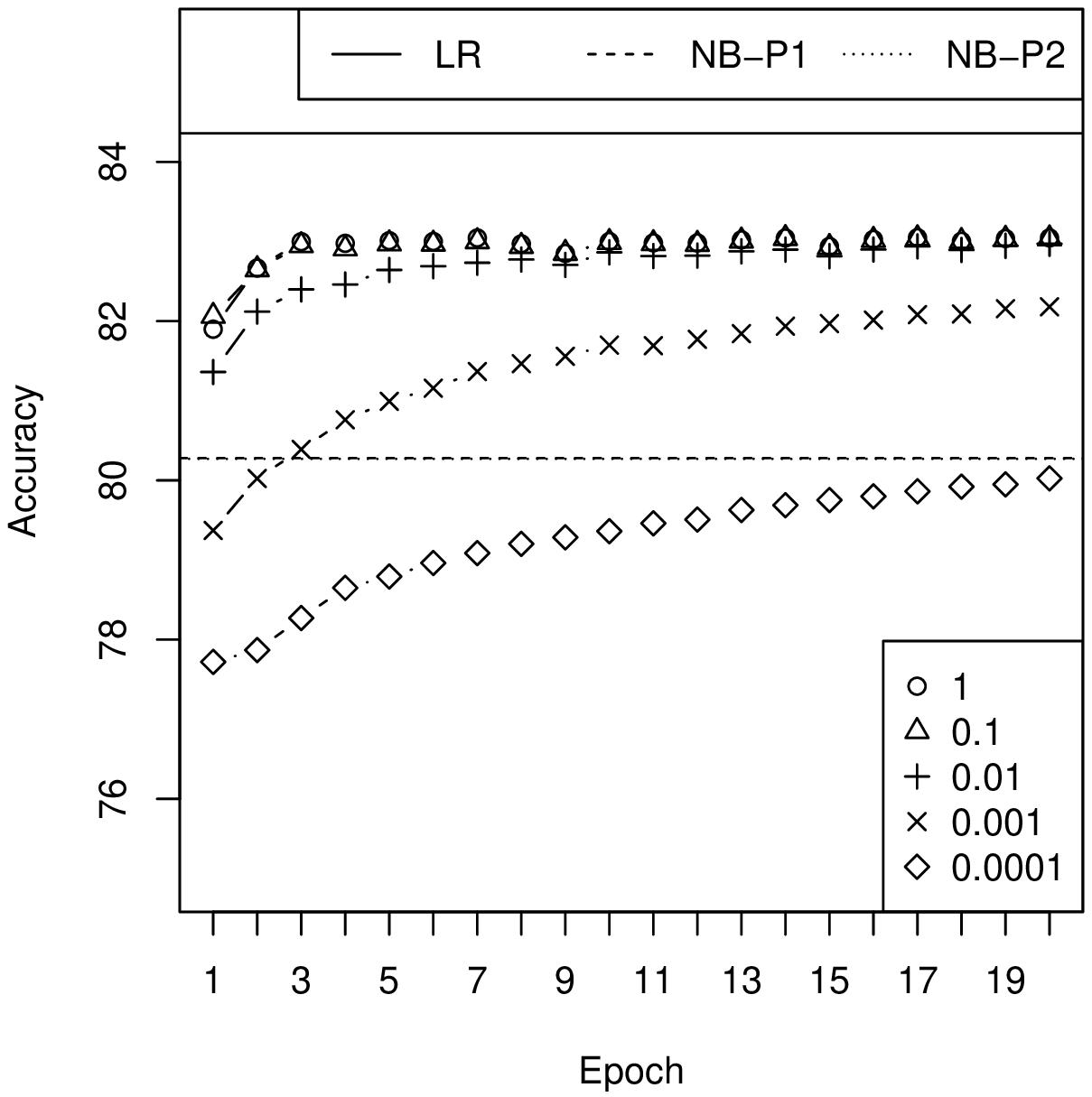,scale=0.43} &
\epsfig{file=./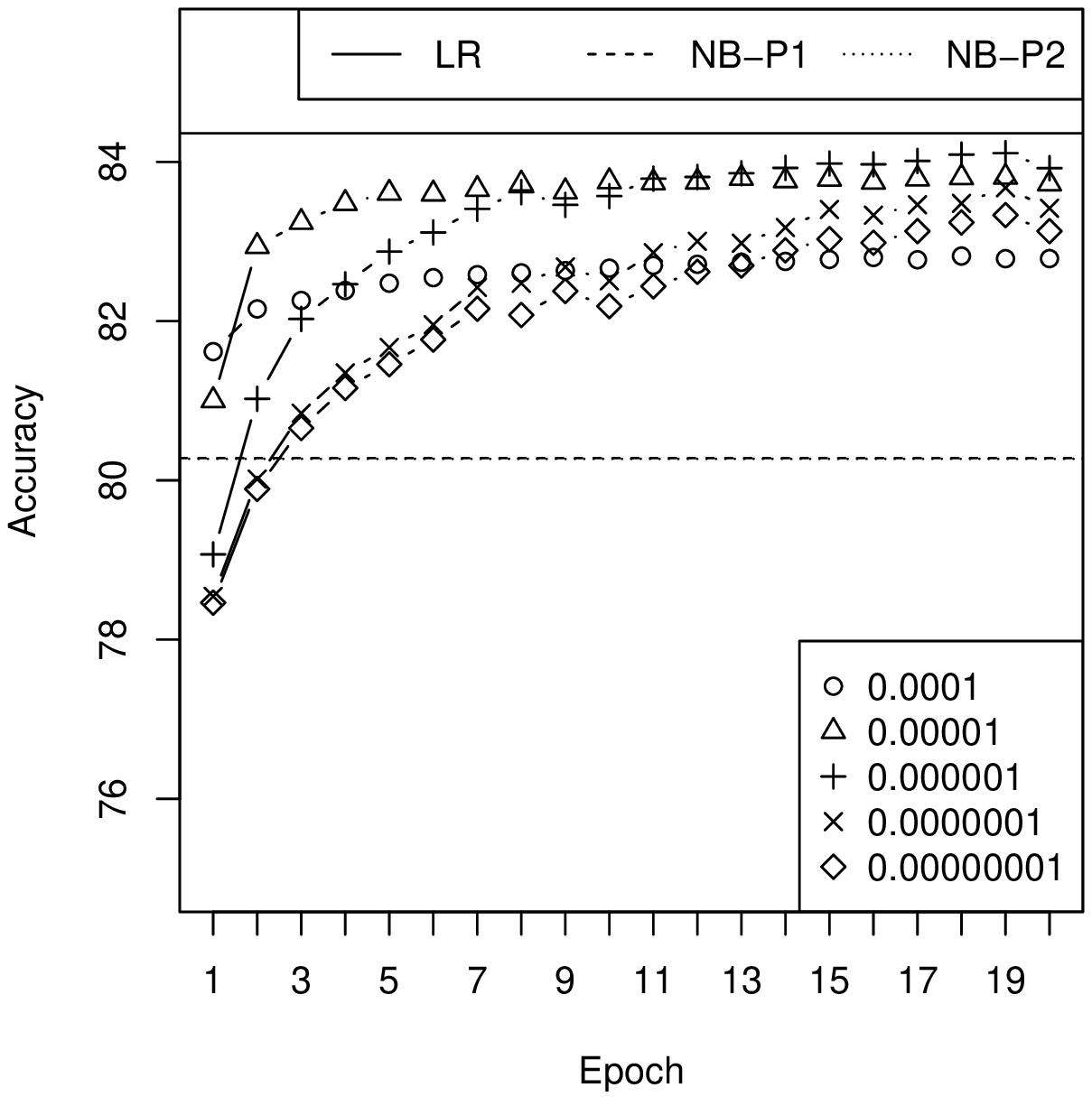,scale=0.43}&
\epsfig{file=./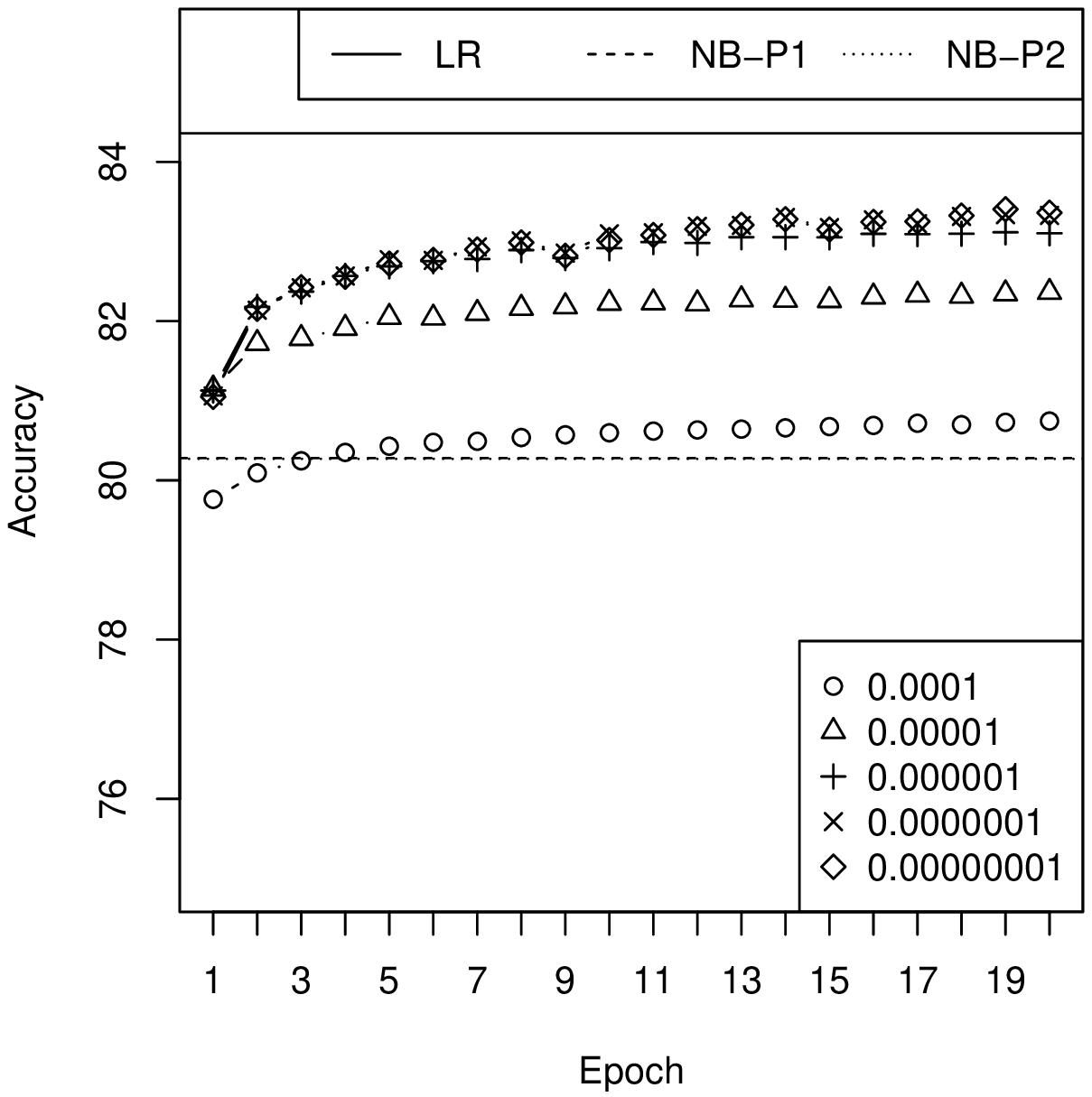,scale=0.43}\\
\\\multicolumn{3}{c}{ACL-IMDB}\\
\epsfig{file=./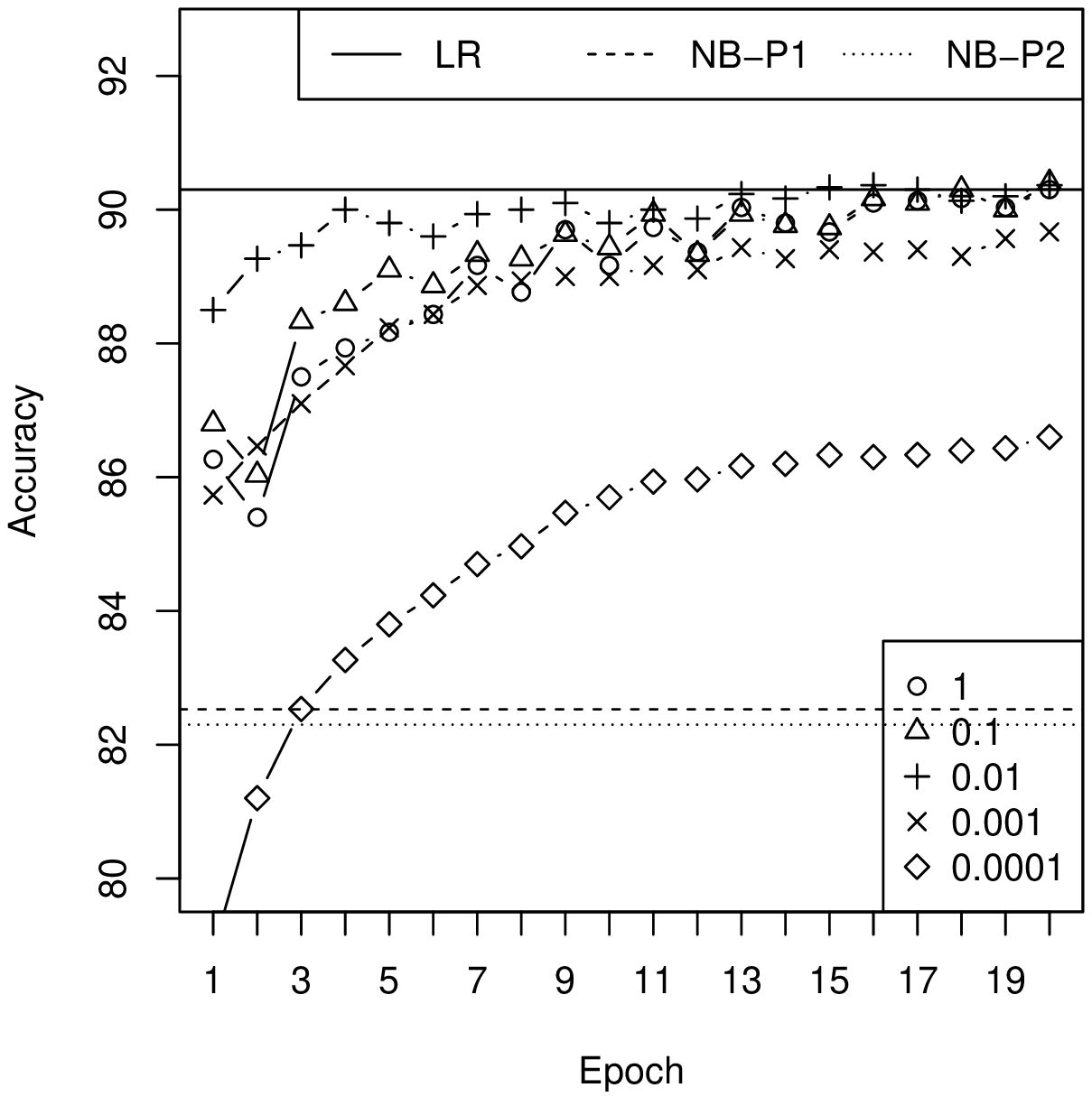,scale=0.43} &
\epsfig{file=./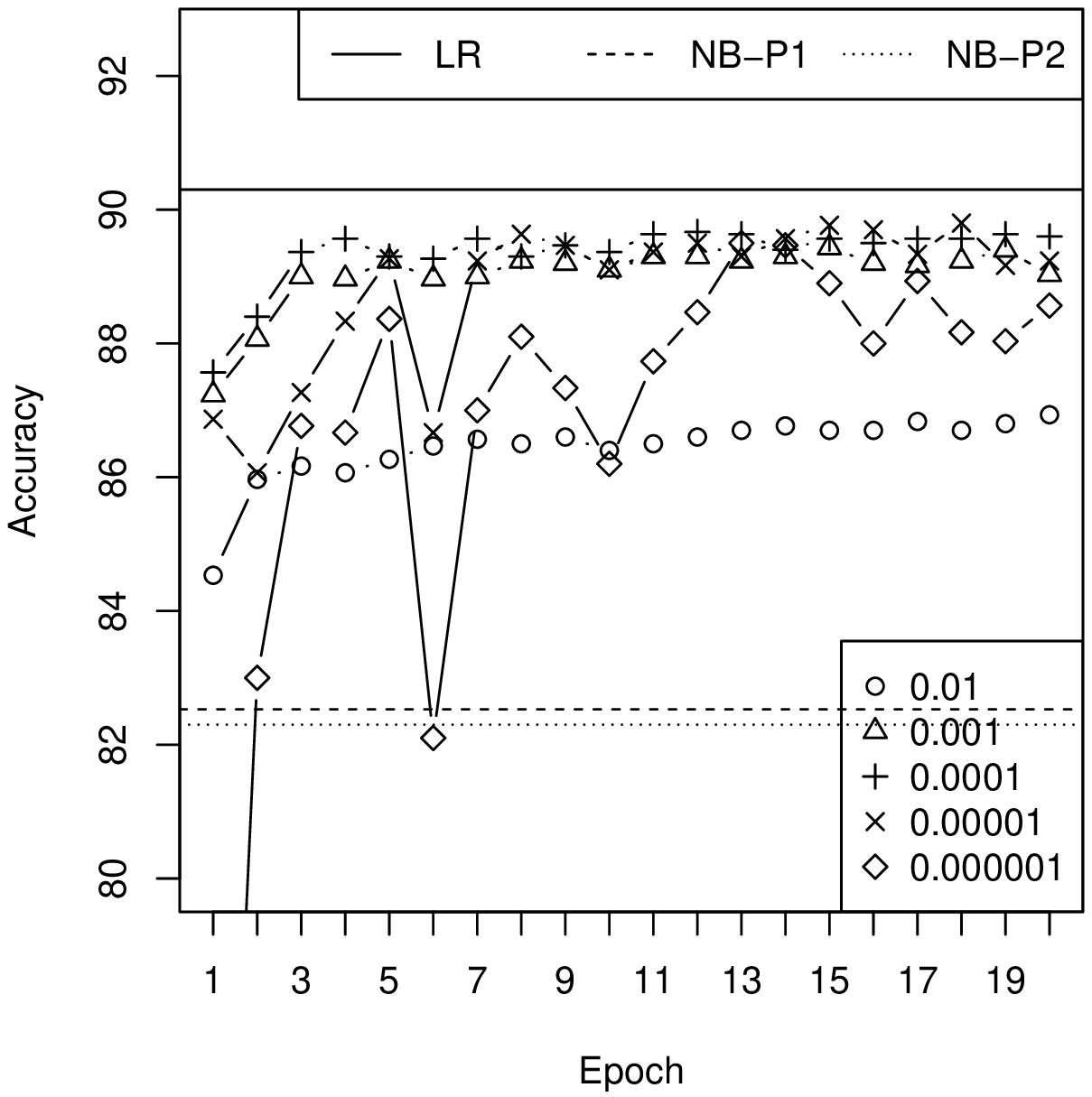,scale=0.43}&
\epsfig{file=./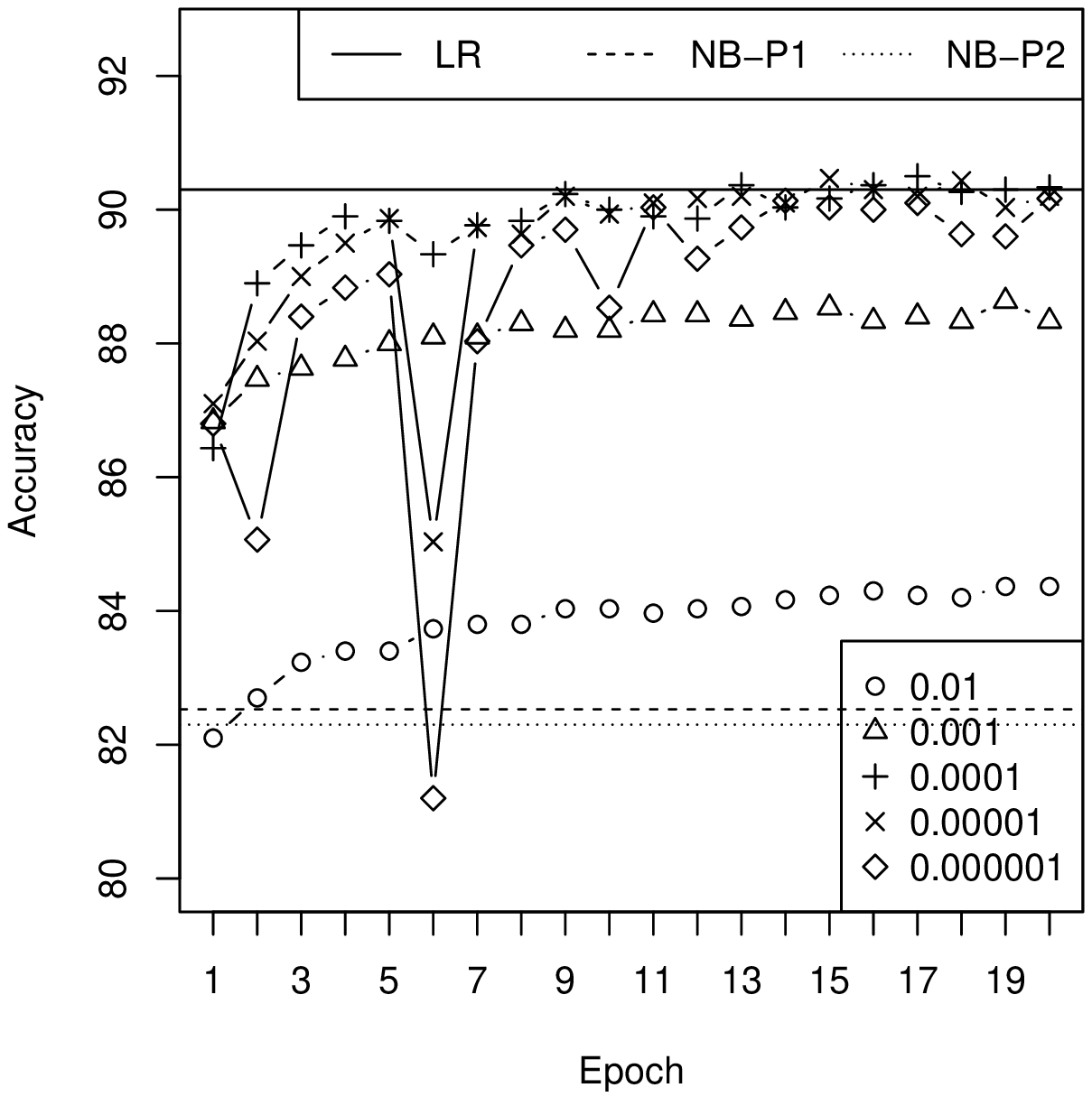,scale=0.43}\\
(a) NCLL-SGD & (b) NCLL-MNB with prior P1 & (c) NCLL-MNB with prior P2 \\
\end{tabular}
\end{center}
\end{figure*}

A new set of experiments is included in this analysis comparing the MNB models learnt with sdEM with the classic stochastic gradient descent (SGD) algorithm. This evaluation is made using the Amazon12 and ACL-IMDB data sets (whose main details can be found in Table \ref{Table:Data}). We choose these data sets because they are binary classification problems, which are very well defined problems for logistic regression and linear SVM models. How SGD is used to train this model can be seen in \cite{bottou2012stochastic}.

In this evaluation we simply plot the evolution of the classification accuracy of the SGD algorithm when training a linear classifier using the NCLL loss (NCLL-SGD) and the Hinge loss (Hinge-SGD) with a L2 regularization for different learning rates or decreasing steps $\rho_t$. SGD is implemented as detailed in \cite{bottou2012stochastic}, where the weight of the regularized  term is fixed to 1e-4.  As recommended in \cite{bottou2012stochastic}, learning rates $\rho_t$ for SGD are computed as follows: $\rho_t=\frac{\lambda}{1+\lambda\cdot 0.0001\cdot t}$. We also look at the evolution of the classification accuracy of NCLL-MNB and Hinge-MNB with different priors in these two data sets and using different learning rates. In both cases, the plotted learning rates $\rho_t$ are selected by using different $\lambda$ values of the form $\lambda\in \{1,0.1,0.01,0.001,0.0001,0.00001,\ldots\}$.  These results are shown in  Figures \ref{Figure:SGDNCLL} and \ref{Figure:SGDHinge}. In each case, we consider the 5 consecutive $\lambda$ values with the quickest convergence speed.

\begin{figure*}[!htbp]
\caption{\label{Figure:SGDHinge} Convergence of the classification accuracy for Hinge-SGD and Hinge-MNB and different learning rates for different priors in the Amazon12 and ACL-IMDB data set. Solid lines detail the accuracy of the aforementioned Liblinear's SVM classifier and dashed lines detail the accuracy of the plain MNB with the corresponding prior. The numbers of the bottom right legends correspond to different $\lambda$ values (see Section 3.2 of the main paper), which defines how the sdEM's learning rates $\rho_t$ decreases over time.}
\begin{center}
\begin{tabular}{ccc}
\multicolumn{3}{c}{Amazon12}\\
\epsfig{file=./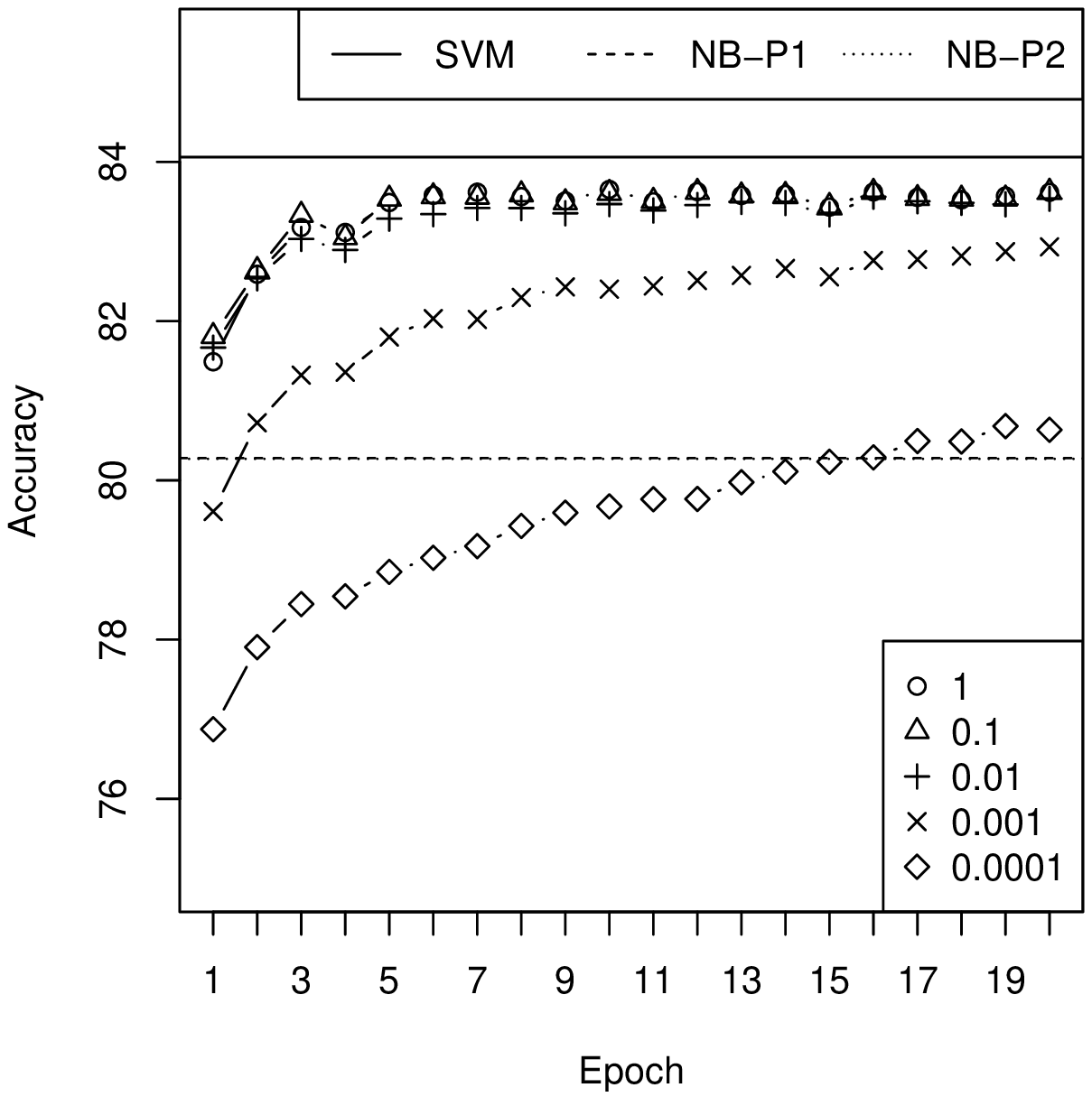,scale=0.43} &
\epsfig{file=./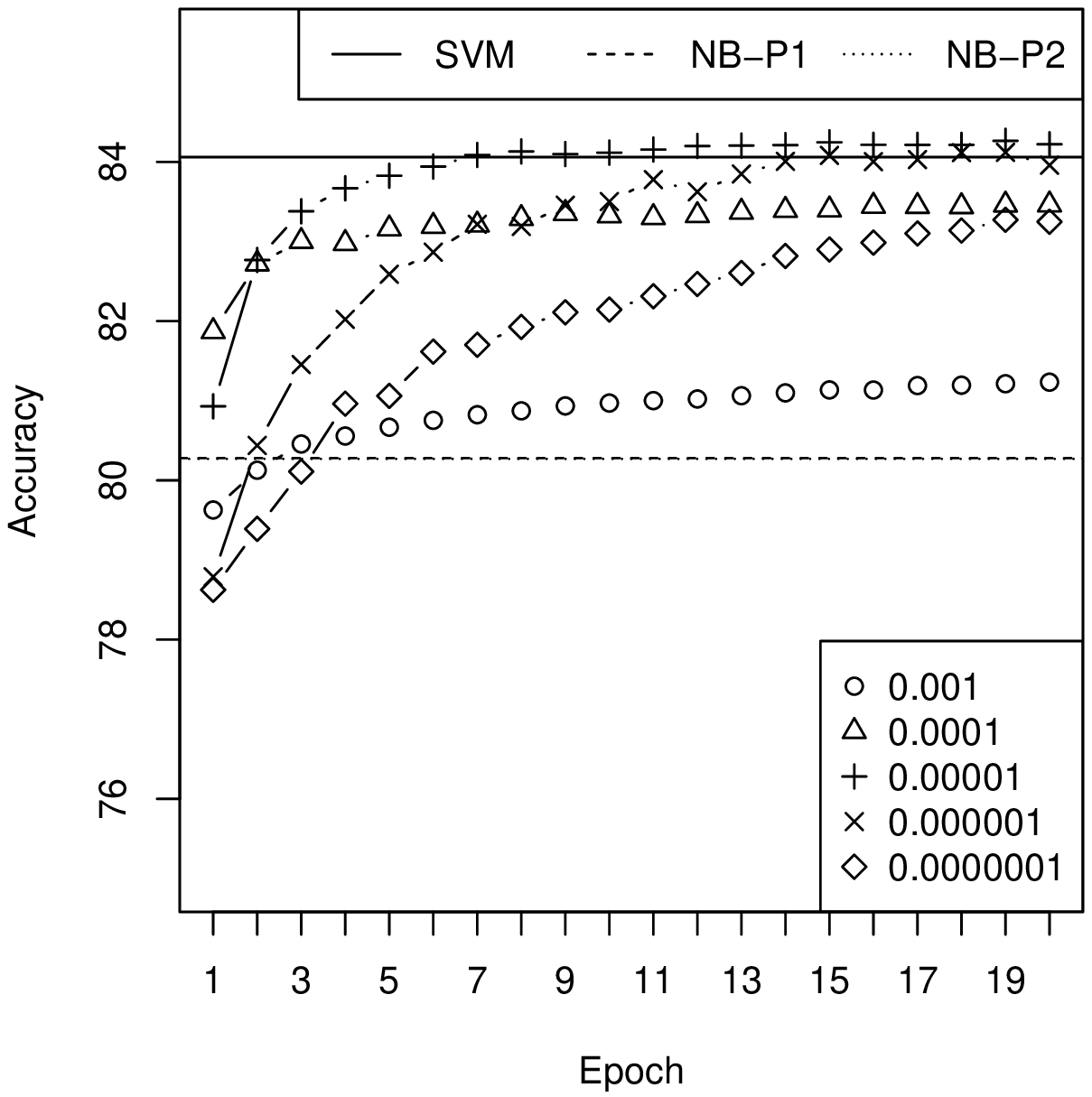,scale=0.43}&
\epsfig{file=./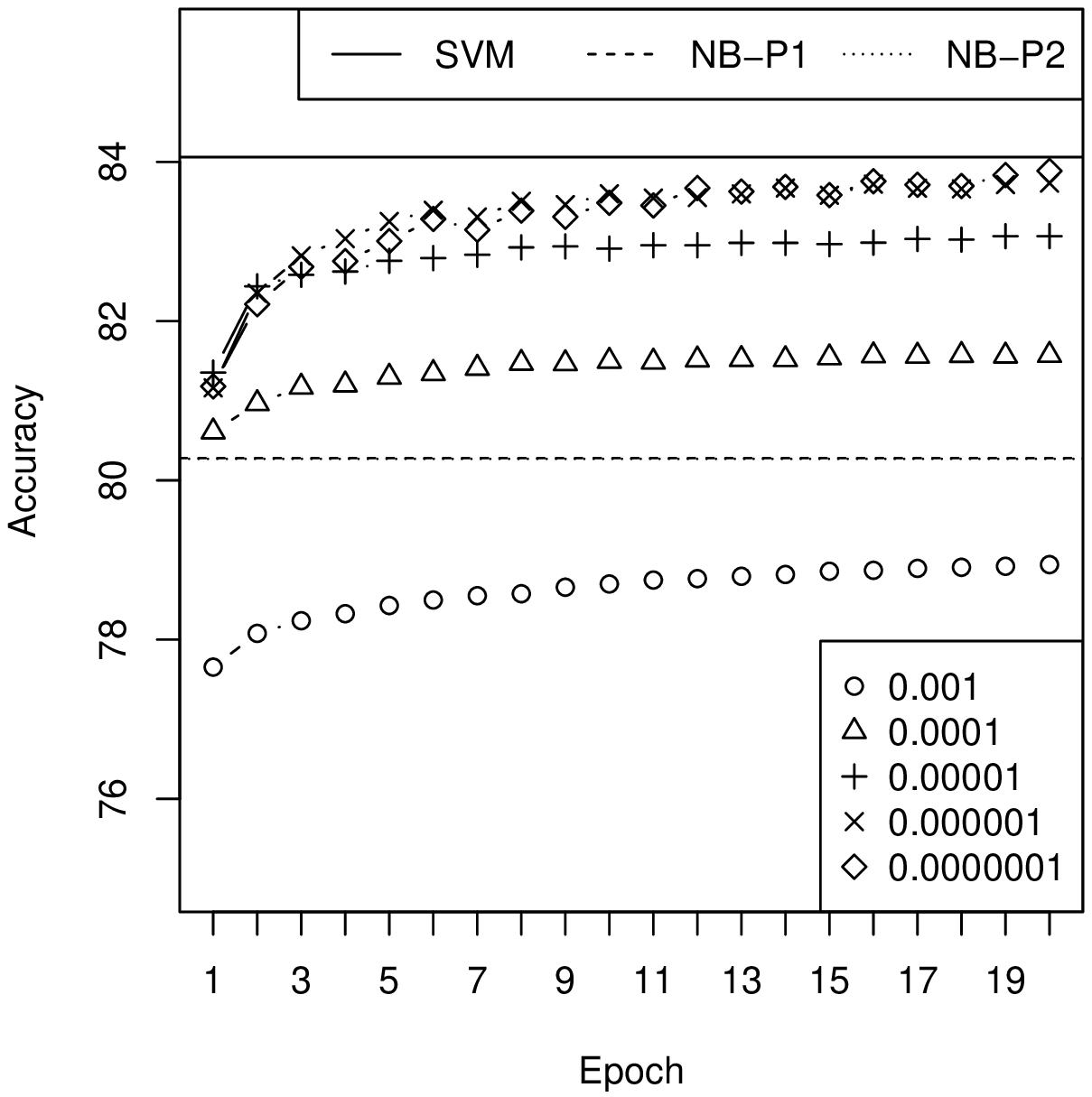,scale=0.43}\\
\\\multicolumn{3}{c}{ACL-IMDB}\\
\epsfig{file=./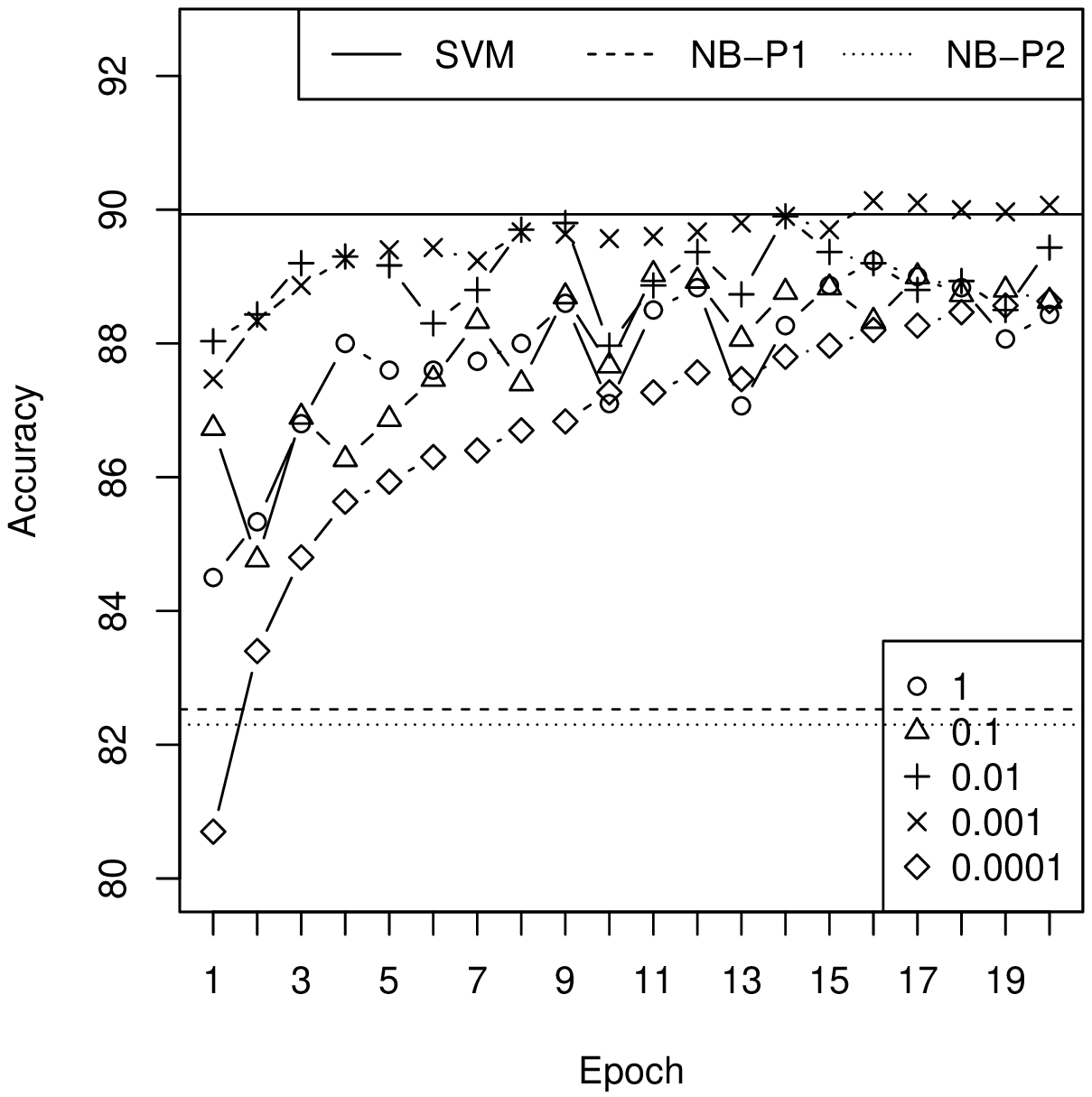,scale=0.43} &
\epsfig{file=./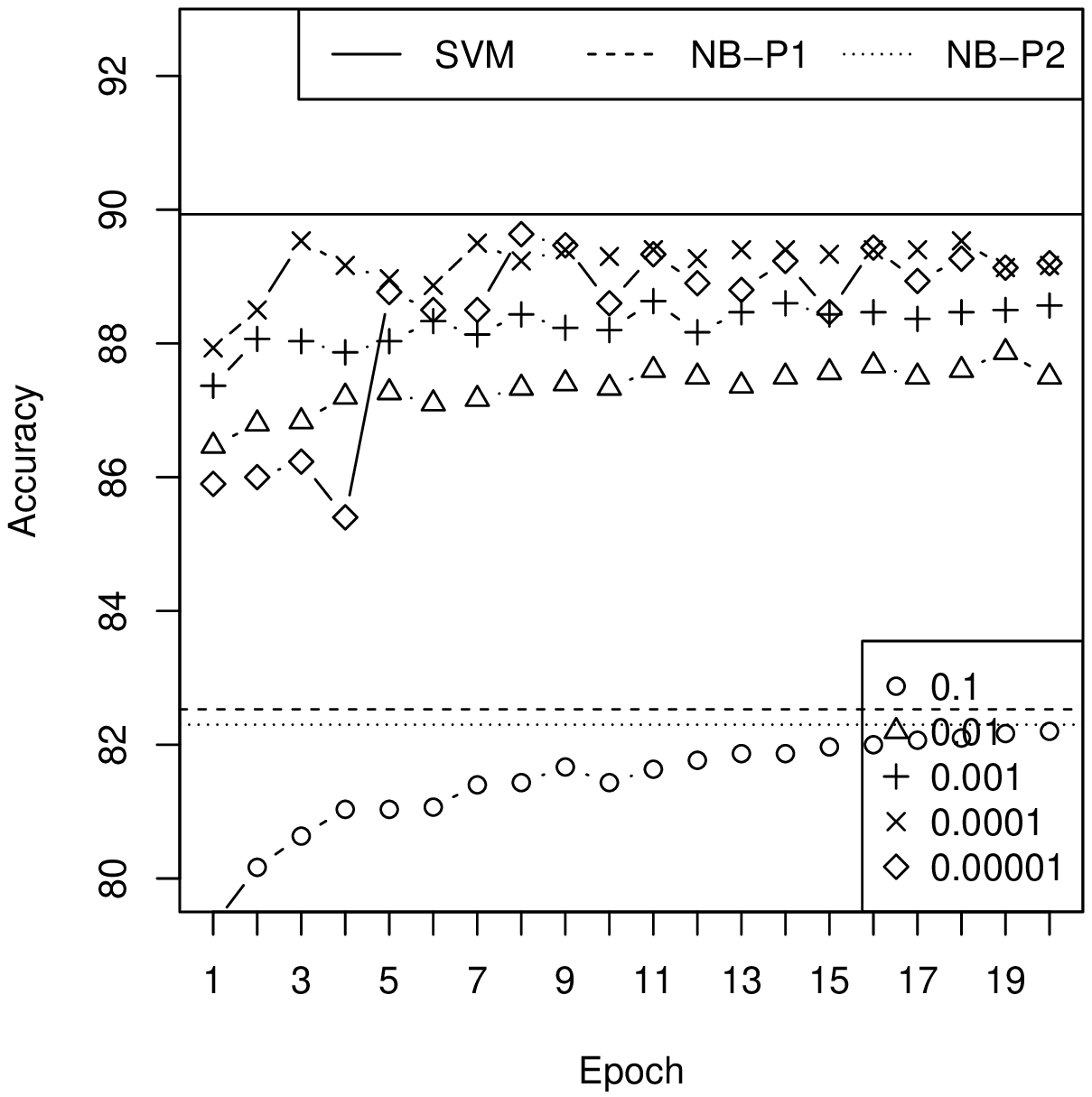,scale=0.43}&
\epsfig{file=./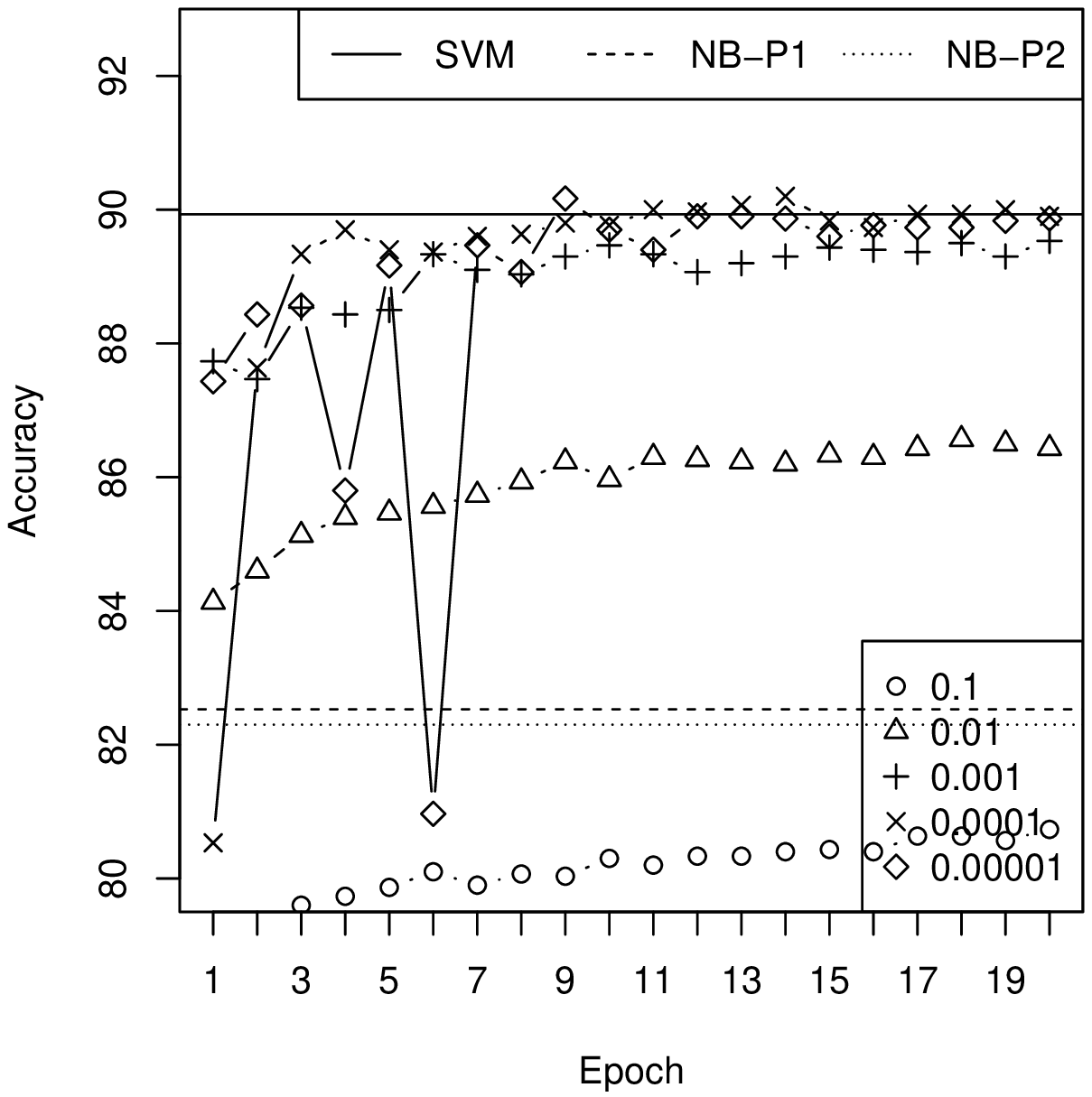,scale=0.43}\\
(a) Hinge-SGD & (b) Hinge-MNB with prior P1 & (c) Hinge-MNB with prior P2 \\
\end{tabular}
\end{center}
\end{figure*}

\section{Latent Dirichlet Allocation (LDA) for text classification}\label{Section:LDA}

\subsection*{Description of the algorithm}

As commented in the main paper,  we depart from a classification model similar to MNB, but where the documents of the same class are now modeled using an independent LDA model instead of a multinomial distribution. The generative process of each label-document pair in the corpus would be as follows \cite{blei2003latent}:

\begin{enumerate}
\item Choose class label $y\sim p(y|\theta_Y)$, a multinomial probability. 
\item Choose $N\sim Poisson(\xi_y)$, the length of the document follows a Poisson distribution.
\item Choose $\phi_y\sim Dir(\alpha_y)$, a Dirichlet distribution with dimension $|Z|$ (the meta-parameters are set to $1/|Z|$ in the experimental evaluation).
\item For each of the $N$ words $w_n$:
\begin{enumerate}
\item Choose a topic $z_n\sim Multinomial(\phi_y)$ with dimension $|Z|$.
\item Choose a word $w_n\sim p(w_n|z_n,\beta_y)$, a multinomial probability conditioned on the topic
$z_n$.
\end{enumerate}
\end{enumerate}

In our case the unknown parameters are the $\beta_y$ for each class label, which defines the multinomial distribution of the step 4 (b) and the parameter $\theta_Y$ which defines the prior. 

We denote by $d$ to a document as a bag of words $d=\{w_1,\ldots,w_N\}$ and we denote by $z_d$ to a particular hidden topic assignment vector for the words in $d$. Then the sufficient statistics for this model would be a three dimensional matrix indexed by $k\in\{1,...,|Y|\}$, $z\in\{1,\ldots,|Z|\}$ and $w \in \{1,\ldots,|W|\}$, where $|W|$ denotes again the total number of different words in the corpus. The $(k,z,w)$-th component of this sufficient statistics matrix is computed as follows:

$$s_{k,z,w}(y,z_d,d) = I[y=k]\sum_n I[z_n=z]I[w_n=w]$$

As previously commented in the main paper, these sufficient statistics would correspond to the "words per hidden topic counts". By adding the "prior class counts", we would complete all the sufficient statistics that define this classification model.  

As also commented in the main paper, similarly to \cite{RohdeCappe2011}, we used an online Collapsed Gibbs sampling method to obtain, at convergence, unbiased estimates of the expected sufficient statistics (see Section 3.5 in the main paper). This collapsed Gibbs sampling method makes used of the analytical marginalization of the parameter  $\phi_y$ and samples in turn each of the indicator variables $z_1,\ldots,z_N$.  The probability of an indicator variable $z_n$  conditioned on all the words of the document and all the other indicators variables can be computed as follows:

\begin{equation}\label{Equation:Gibbs}
p(z_n|y,\{z_{n'}\}_{n'\neq n},d) \propto \beta_{y,z_n,w_n} \cdot(S^{(-w_n)}_{z_n} + \alpha)
\end{equation}
\noindent where $S^{(-w_n)}_z = \sum_{n'\neq n} I[z_{n'}= z]$ and $\beta_{y,z_n,w_n}$ is the component of the $\beta$ parameter vector which defines the probability that the $n$-th word in document $d$ is equal to $w_n$ given that hidden topic is $z_n$ and the class label of the document is $y$, $p(w_n|z_n,\beta_y)$.  

The above equation defines a Markov chain that when it is run generates unbiased samples from its stationary distribution, $p(z_n|d,y)$ (after discarding the first burn-in samples). So, we could then compute the expected sufficient statistics required to apply the sdEM algorithm over these models.  Let us note that under our online settings the $\beta$ parameter of Equation (\ref{Equation:Gibbs}) is fixed to the values $\beta_{t-1}$ estimated in the previous step and the this online collapsed Gibbs sampler only requires that the simulation is conditioned to the latent variables of the current observed document (i.e. it does not involve the hidden topics of the other documents in the corpus as happens with its batch counterpart). 

In Algorithm \ref{alg:sdEM_LDA_NCLL} and Algorithm \ref{alg:sdEM_LDA_Hinge}, we give a pseudocode description of the sdEM algorithm when applied to the this LDA classification model when using the NCLL and the Hinge loss functions, respectively. As can be seen, this algorithms does not directly relate to the standard LDA implementation, because we employ the same simplification used in the implementation\footnote{Code available at http://www.cs.cmu.edu/~chongw/slda/}  of the sLDA algorithm \cite{blei2007supervised} for multi-class prediction. This simplification assumes that all the occurrences of the same word in a document share the same hidden topic. The first effect of this assumption is that the number of hidden variables is reduced and the algorithm is much quicker. Whether this simplifying assumption has a positive or negative effect in the classification performance of the models is not evaluated here.

Let us also see in the pseudo-code of these two algorithms, that Hinge-LDA will tend to be computationally more efficient than NCLL-LDA, because Hinge-LDA does not update any parameter when it classifies a document with a margin higher than 1. However, NCLL-LDA always updates all the parameters. When we deal with a high number of classes, this may imply a great difference  in the computational performance. But this is something which is not evaluated in this first experimental study.

We also use a heuristic method\footnote{It is proposed in http://shuyo.wordpress.com/2011/06/27/collapsed-gibbs-sampling-estimation-for-latent-dirichlet-allocation-3/.} to initialize the hidden topics variables $z_n$ of the incoming document which consists in sampling the hidden topics according to Equation \ref{Equation:Gibbs}, where $S^{(-w_n)}_{z_n}$ is computed on-the-fly i.e. for the first word is a vector of zeros and, then, it is updated according to the sampled topics. It is similar to running collapsed Gibbs sampling for one iteration.

We emphasis again that these algorithms are based on the updating equations given in the Table 2 of the main paper.

\begin{figure*}[!htbp]
\caption{\label{Figure:NCLL_LDA} Convergence behavior of sdEM when applied to the  LDA classification model. Left side figures consider the NCLL-LDA model, i.e. when sdEM minimizes the NCLL loss function. Then, the series NCLL-disc and Perplexity-disc display the evolution of these two losses for the NCLL-LDA model. Right side figures consider the Hinge-LDA model, i.e. when sdEM minimizes the Hinge loss function. Then, the series Hinge-disc and Perplexity-disc display the evolution of these two losses for the Hinge-LDA model. Series NCLL-gen, Hinge-gen and Perpelexity-gen show the evolution of the NCLL, Hinge and perplexity losses, respectively, for the NLL-LDA model, i.e. when sdEM minimizes the negative log-likelihood (NLL) loss function.}
\begin{center}
\begin{tabular}{cc}
\multicolumn{2}{c}{\textbf{Reuters-R8 data set}}\\\\
\epsfig{file=./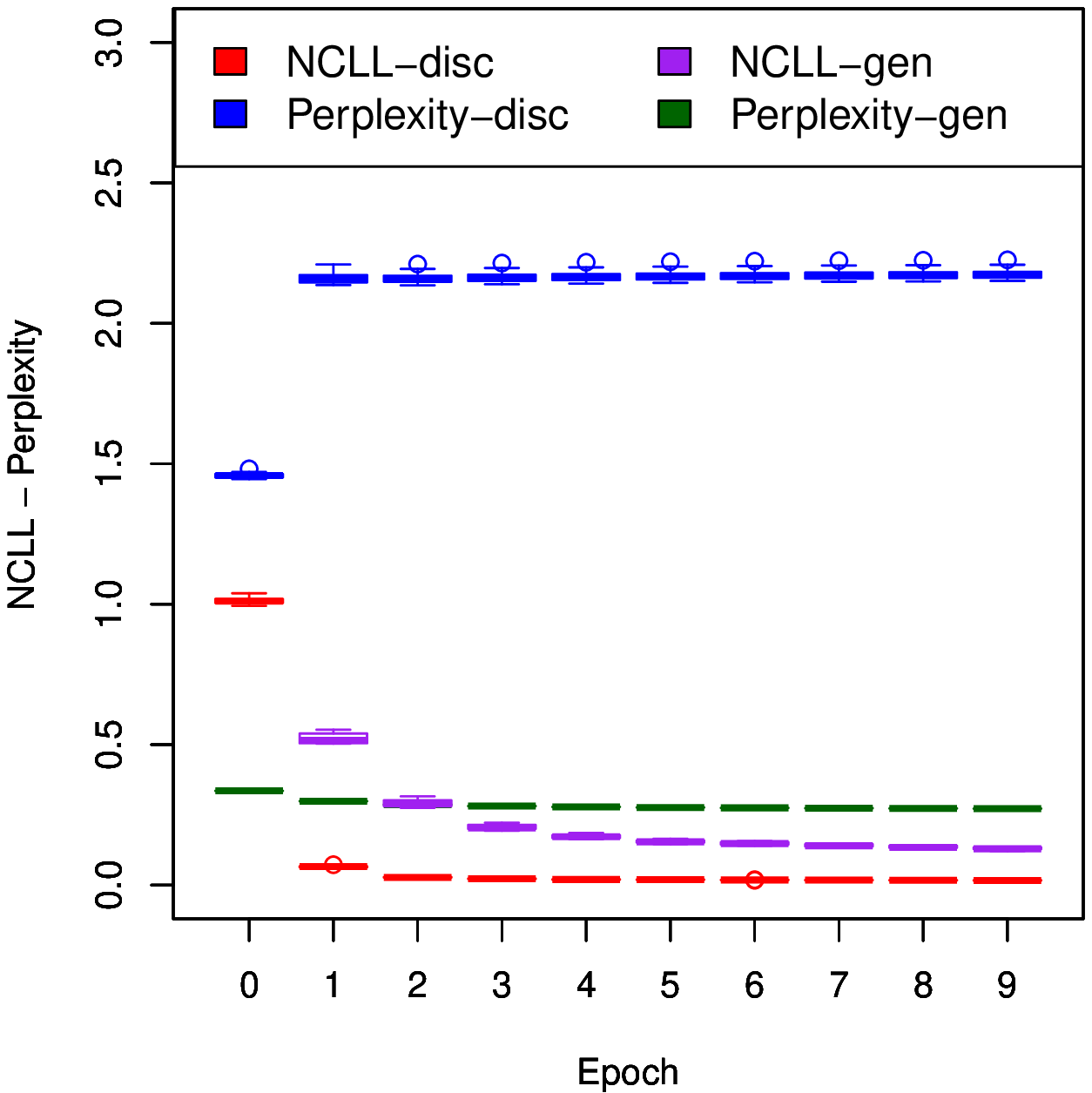,scale=0.6}&
\epsfig{file=./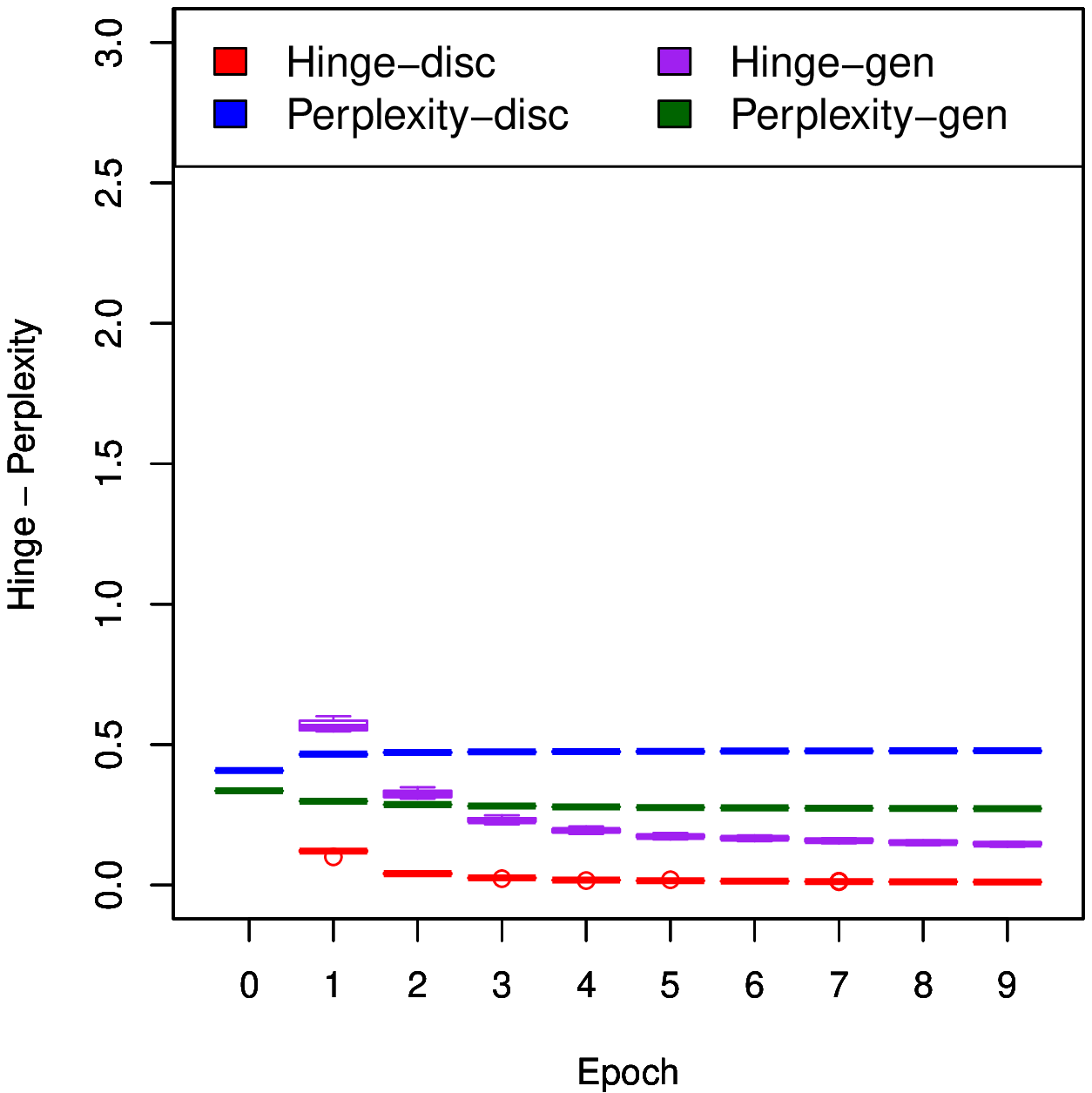,scale=0.6}\\\\
\multicolumn{2}{c}{\textbf{Web-KB data set}}\\\\
\epsfig{file=./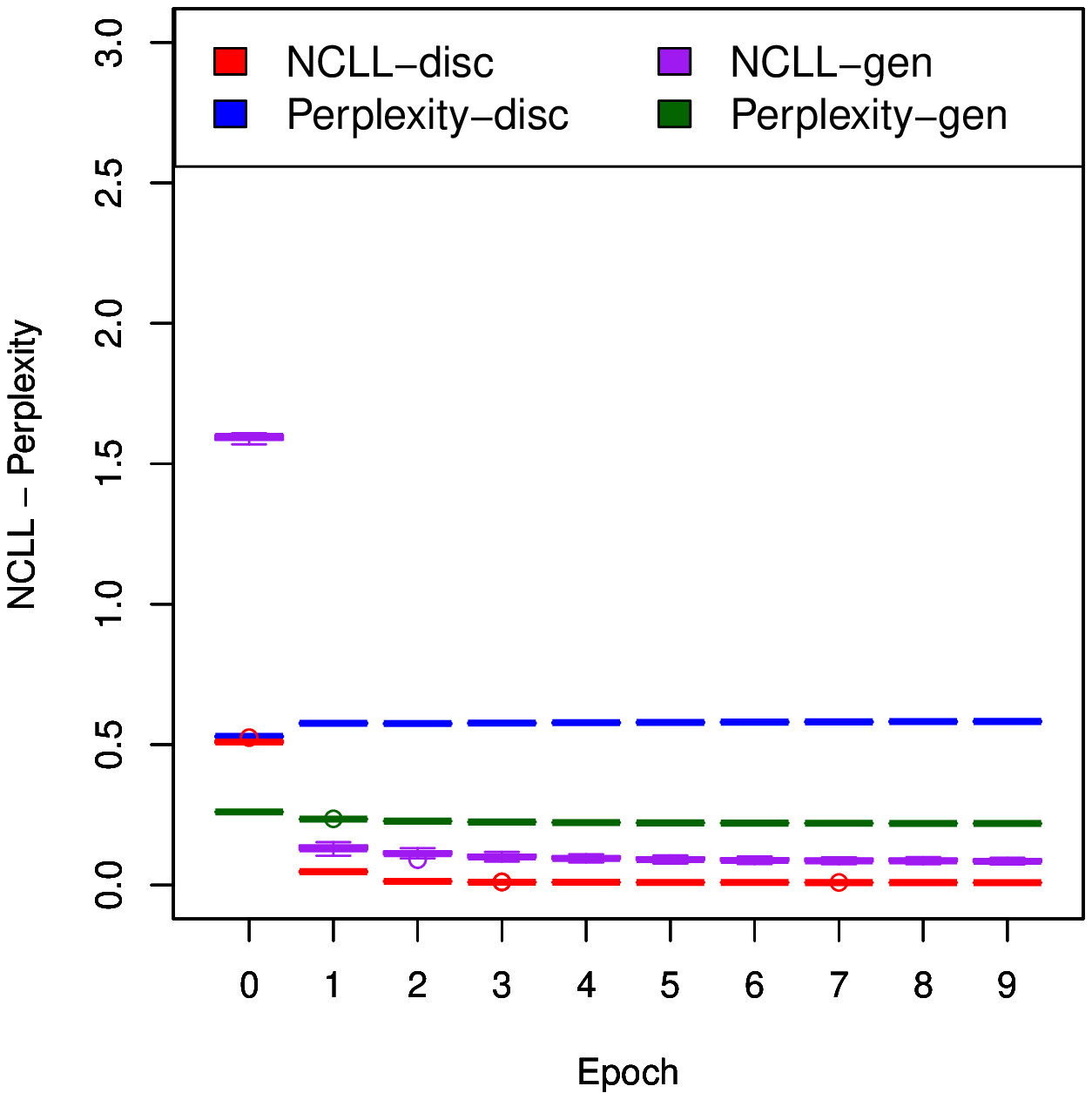,scale=0.6}&
\epsfig{file=./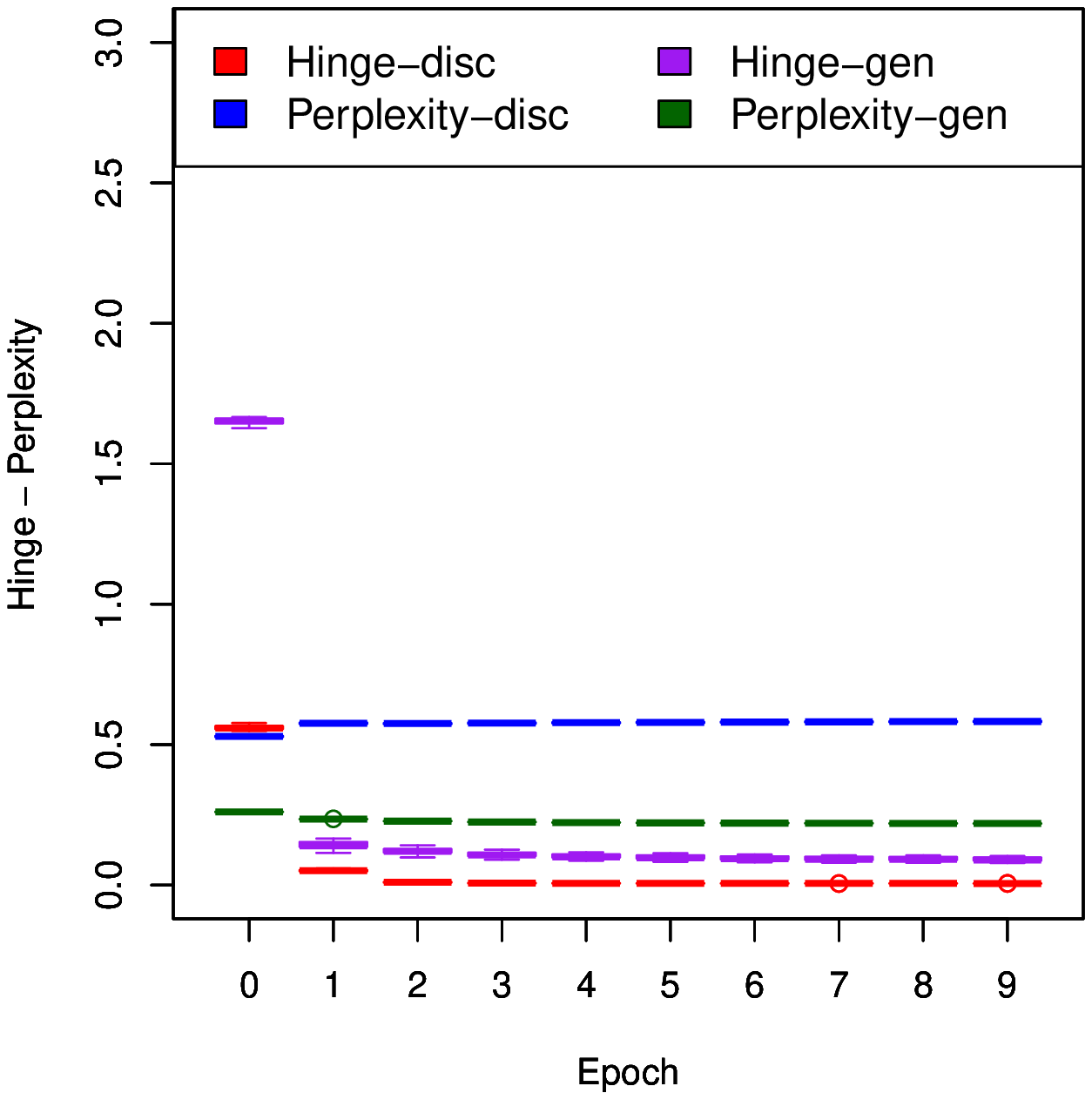,scale=0.6}\\
(a) NCLL-LDA  & (b) Hinge-LDA \\
\end{tabular}
\end{center}
\end{figure*}

\subsection*{Experimental Evaluation}

As previously commented in the paper, this evaluation was carried out using the standard train/test split of the Reuters21578-R8  and Web-KB  data sets
\cite{2007:phd-Ana-Cardoso-Cachopo}, under the same preprocessing than in the MNB's experiments. In Table \ref{Table:Data} some statistics about these data sets are given. 

We used sdEM to train 2-topics LDA classification by minimizing the NCLL loss (NCLL-LDA), by minimizing the
Hinge loss (Hinge-LDA), and also by minimizing the negative log-likelihood loss (NLL-LDA),
following the updating equations of Table 2 in the main paper.  We remind that at Figure 3 in the main paper, we show the results of the comparison of the classification accuracy of these models with the results obtained by supervised-LDA (sLDA) \cite{blei2007supervised} using the same prior, but using 50 topics because with less topics it produced worse results. 

We plot here at Figure \ref{Figure:NCLL_LDA}, the convergence behavior at the training phase of the above models. The aim is to highlight how there is again a similar trade-off between the different losses when we train this model by minimizing a discriminative loss function such as NCLL or Hinge loss w.r.t. when we train this same model by minimizing a "generative loss" such as the negative log-likelihood (NLL).

Looking at these figures we can see like neither NCLL-LDA nor Hinge-LDA decrease the perplexity loss in opposite to NLL-LDA. We can also see that NLL-LDA does decrease either the NCLL or the Hinge loss but not so successfully as NCLL-LDA or Hinge-LDA.

\section{sdEM Java Code}\label{Section:Code}
All the code used to build all the experiments presented in this supplemental material or in the main paper can be downloaded from the following code repository "https://sourceforge.net/projects/sdem/" (in "Files" tab). This code is written in Java and mostly builds on Weka \cite{hall2009weka} data structures.


\begin{algorithm*}[!htbp]
\caption{sdEM for Multinomial Naive Bayes with the NCLL loss. $|d|$ denotes the number of different words in the current document $d$ and $|w|_d$ to the number of times word $w$ appears in document $d$. $|W|$ denotes the total number of different words in the corpus.} \label{alg:sdEM_MNB_NCLL}
\begin{algorithmic}[1]
\REQUIRE  $D$ is randomly shuffled.
\REQUIRE  $\alpha$ value as prior count for each word. Two values are considered $\alpha=1$ and $\alpha=\ln |W|$.
\STATE $\forall k,w$ $N[k][w]=\alpha$; $C[k]=1.0$; $M[k]=\alpha*|W|$; 
\STATE $t=0$;
\STATE $\gamma = 0$;

\REPEAT

\FOR{ each label-document pair  $(y,d)$}
\STATE $t=t + 1$;
\STATE $\rho=\frac{1}{1+\lambda\cdot t}$
\STATE $\gamma = \gamma  + \alpha\cdot\frac{\rho}{n}$

\FOR{each distinct word $w$ in the document $d$} 
\STATE $N[y][w] = N[y][w] + \rho\cdot|w|_d\cdot(1-p(Y=y|d,N,M,C,\gamma))$;
\STATE $M[y] = M[y] + \rho\cdot|w|_d\cdot(1-p(Y=y|d,N,M,C,\gamma))$;
\FOR {$k=1,...,|Y|: k\neq y$}
\STATE $oldVal = N[k][w]$;
\STATE $N[k][w] =  N[k][w] - \rho\cdot|w|_d\cdot p s(Y=k|d,N,M,C,\gamma )$;
\STATE $N[k][w] = \max(N[k][w],0)$;
\STATE $M[k] = M[k] + (N[k][w] - oldVal)$;
\ENDFOR
\ENDFOR

\STATE $C[y] = C[y]  + \rho\cdot(1-p(Y=y|d,N,M,C,\gamma))$; 
\FOR {$k=1,...,|Y|: k\neq y$}
\STATE $C[k] = C[k]  - \rho\cdot p(Y=k|d,N,M,C,\gamma )$;
\STATE $C[k] = \max(C[k],0)$;
\ENDFOR
\ENDFOR

\UNTIL convergence

\STATE $\bar{N} = Normalize(N,\gamma);$
\STATE $\bar{C} = Normalize(C,\gamma);$

\RETURN $\bar{N}$ and $\bar{C}$;

\end{algorithmic}
\end{algorithm*}
 
\begin{algorithm*}[!htbp]
\caption{Compute predictions $P(Y=k|d,N,M,C,\gamma)$ with Multinomial Naive Bayes. $|d|$ denotes the number of different words in the current document $d$ and $|w|_d$ denotes the number of times word $w$ appears in document $d$. The function "Logs2Probs" simply exponentiate the log values and then normalize.} \label{alg:sdEM_MNB_Posterior}
\begin{algorithmic}[1]
\REQUIRE  $N$, $M$, $C$, $\gamma$ with non-negative values.

\STATE $\forall k$ $LogDC[k]=0.0$;
\FOR {$k=1,...,|Y|$}
\STATE $LogDC[k] = \ln(C[k] + \gamma)$;
\STATE $sumW=0$;
\FOR{each distinct word $w$ in the document $d$} 
\STATE $LogDC[k] = LogDC[k] + |w|_d\cdot\ln(N[y][w] +\gamma)$;
\STATE $sumW = sumW + |w|_d$;
\ENDFOR
\STATE $LogDC[k] = LogDC[k]  - sumW\cdot\ln(M[k] +\gamma\cdot|d|)$;
\ENDFOR

\RETURN Logs2Probs(LogDC);

\end{algorithmic}
\end{algorithm*}

\begin{algorithm*}[!htbp]
\caption{sdEM for Multinomial Naive Bayes with the Hinge loss. $|d|$ denotes the number of different words in the current document $d$ and $|w|_d$ denotes the number of times word $w$ appears in document $d$. $|W|$ denotes the total number of different words in the corpus.} \label{alg:sdEM_MNB_Hinge}
\begin{algorithmic}[1]
\REQUIRE  $D$ is randomly shuffled.
\REQUIRE  $\alpha$ value as prior count for each word. Two values are considered $\alpha=1$ and $\alpha=\ln |W|$.
\STATE $\forall k,w$ $N[k][w]=\alpha$; $C[k]=1.0$; $M[k]=\alpha*|d|$; 
\STATE $t=0$;
\STATE $\gamma = 0$;

\REPEAT

\FOR{ each label-document pair $(y,d)$}
\STATE $t=t + 1$;
\STATE $\rho=\frac{1}{1+\lambda\cdot t}$
\STATE $\gamma = \gamma  +\alpha\cdot\frac{\rho}{n}$
\STATE $\bar{y} = arg\max_{y'\neq y} p(Y=y'|x)$;
\IF{$(\ln p(Y=y|x) - \ln p(Y=\bar{y}|x)) > 1$}
\STATE Go for the next document;
\ENDIF

\FOR{each distinct word $w$ in the document $d$} 
\STATE $N[y][w] = N[y][w] + \rho\cdot|w|_d\cdot(1-p(Y=y|d,N,M,C,\gamma))$;
\STATE $M[y] = M[y] + \rho\cdot|w|_d\cdot(1-p(Y=y|d,N,M,C,\gamma))$;
\STATE $oldVal = N[\bar{y}][w]$;
\STATE $N[\bar{y}][w] = N[\bar{y}][w] - \rho\cdot|w|_d\cdot p(Y=\bar{y}|d,N,M,C,\gamma)$;
\STATE $N[\bar{y}][w] = \max(N[\bar{y}][w],0)$;
\STATE $M[k] = M[k] + (N[\bar{y}][w] - oldVal)$;
\ENDFOR

\STATE $C[y] = C[y]  + \rho\cdot(1-p(Y=y|d,N,M,C,\gamma))$; 
\STATE $C[\bar{y}] = C[\bar{y}]  - \rho\cdot p(Y=\bar{y}|d,N,M,C,\gamma)$; 
\STATE $C[\bar{y}] = \max(C[\bar{y}],0)$;
\ENDFOR

\UNTIL convergence

\STATE $\bar{N} = Normalize(N,\gamma);$
\STATE $\bar{C} = Normalize(C,\gamma);$

\RETURN $\bar{N}$ and $\bar{C}$;

\end{algorithmic}
\end{algorithm*}

\begin{algorithm*}[!htbp]
\caption{sdEM for the LDA based classifier using the NCLL loss. $|d|$ denotes the number of different words in the current document $d$, $|w|_d$ denotes the number of times word $w$ appears in document $d$ and $|Z|$ denotes the number of hidden topics in the LDA model.} \label{alg:sdEM_LDA_NCLL}
\begin{algorithmic}[1]
\REQUIRE  $D$ is randomly shuffled.
\REQUIRE  $\eta$ defines the prior for the "word per topic counts". In the experiments, it is fixed to $\eta=0.1$. 
\STATE $\forall k,z,w$ $N[k][z][w]=\frac{\eta}{|Z|}$; $C[k]=1.0$; $M[k][z]=|W|\frac{\eta}{|Z|}$; 
\STATE $t=0$;
\STATE $\gamma = 0$;

\REPEAT

\FOR{ each label-document pair  $(y,d)$}
\STATE $t=t + 1$;
\STATE $\rho=\frac{1}{1+\lambda\cdot t}$
\STATE $\gamma = \gamma  + \frac{\eta}{|Z|}\cdot\frac{\rho}{n}$
\STATE OnlineLDA($d$,$N[y]$,$M[y]$,$\rho$, $\gamma$, $\varpi =  (1-p(Y=y|d,N,M,C,\gamma))$);
\FOR {$k=1,...,|Y|: k\neq y$}
\STATE OnlineLDA($d$,$N[k]$,$M[k]$,$\rho$, $\gamma$, $\varpi =-p(Y=k|d,N,M,C,\gamma)$);
\ENDFOR
\STATE $C[y] = C[y]  + \rho\cdot(1-p(Y=y|d,N,M,C,\gamma))$; 
\FOR {$k=1,...,|Y|: k\neq y$}
\STATE $C[k] = C[k]  - \rho\cdot p(Y=k|d,N,M,C,\gamma )$;
\STATE $C[k] = \max(C[k],0)$;
\ENDFOR
\ENDFOR

\UNTIL convergence

\STATE $\bar{N} = Normalize(N,\gamma);$
\STATE $\bar{C} = Normalize(C,\gamma);$

\RETURN $\bar{N}$ and $\bar{C}$;

\end{algorithmic}
\end{algorithm*}

\begin{algorithm*}[!htbp]
\caption{OnlineLDA($d$,$N$,$M$,$\rho$, $\gamma$, $\varpi$). The vector $s$ would correspond to the expected sufficient statistics for $d$ computed by  online collpased Gibbs sampling.}\label{alg:OnlineLDA}
\begin{algorithmic}[1]
\REQUIRE $d$,$N$,$M$,$\rho$, $\gamma$, $\varpi$ properly computed.

\STATE s = OnlineCollapsedGibbsSampling(d,N,M,$\gamma$);
\FOR{each distinct word $w$ in the document $d$} 
\FOR{z=1,...,$|Z|$}
\STATE $oldVal =N[z][w]$;å 
\STATE $N[z][w] =  N[z][w] + \rho\cdot \varpi \cdot s[z][w]$;
\STATE $N[z][w] = \max(N[z][w],0)$;
\STATE $M[z] = M[z] + (N[z][w]-oldVal)$;
\ENDFOR
\ENDFOR

\end{algorithmic}
\end{algorithm*}

\begin{algorithm*}[!htbp]
\caption{sdEM for the LDA based classifier using the Hinge loss. $|d|$ denotes the number of different words in the current document $d$, $|w|_d$ denotes the number of times word $w$ appears in document $d$ and $|Z|$ denotes the number of hidden topics in the LDA model.} \label{alg:sdEM_LDA_Hinge}
\begin{algorithmic}[1]
\REQUIRE  $D$ is randomly shuffled.
\REQUIRE  $\eta$ defines the prior for the "word per topic counts". In the experiments, it is fixed to $\eta=0.1$. 
\STATE $\forall k,z,w$ $N[k][z][w]=\frac{\eta}{|Z|}$; $C[k]=1.0$; $M[k][z]=|W|\frac{\eta}{|Z|}$; 
\STATE $t=0$;
\STATE $\gamma = 0$;

\REPEAT

\FOR{ each label-document pair $(y,d)$}
\STATE $t=t + 1$;
\STATE $\rho=\frac{1}{1+\lambda\cdot t}$
\STATE $\gamma = \gamma  + \frac{\eta}{|Z|}\cdot\frac{\rho}{n}$
\STATE $\bar{y} = arg\max_{y'\neq y} p(Y=y'|x)$;
\IF{$(\ln p(Y=y|x) - \ln p(Y=\bar{y}|x)) > 1$}
\STATE Go for the next document;
\ENDIF
\STATE OnlineLDA($d$,$N[y]$,$M[y]$,$\rho$, $\gamma$, $\varpi =  (1-p(Y=y|d,N,M,C,\gamma))$);
\STATE OnlineLDA($d$,$N[\bar{y}]$,$M[\bar{y}]$,$\rho$, $\gamma$, $\varpi =-p(Y=\bar{y}|d,N,M,C,\gamma)$);
\STATE $C[y] = C[y]  + \rho\cdot(1-p(Y=y|d,N,M,C,\gamma))$; 
\STATE $C[\bar{y}] = C[\bar{y}]  - \rho\cdot p(Y=\bar{y}|d,N,M,C,\gamma)$; 
\STATE $C[\bar{y}] = \max(C[\bar{y}],0)$;
\ENDFOR

\UNTIL convergence

\STATE $\bar{N} = Normalize(N,\gamma);$
\STATE $\bar{C} = Normalize(C,\gamma);$

\RETURN $\bar{N}$ and $\bar{C}$;

\end{algorithmic}
\end{algorithm*}

\end{appendices}

\end{document}